\newcommand{\bsx}{{\boldsymbol{x}}}
\newcommand{\bsa}{{\boldsymbol{a}}}
\newcommand{\dolb}{{\boldsymbol{\$}}}
\newcommand{\cs}{{\cal S}}
\newcommand{\cu}{{\cal U}}
\newcommand{\cv}{{\cal V}}
\newcommand{\cc}{{\cal C}}
\newcommand{\cp}{{\cal P}}
\newcommand{\ct}{{\cal T}}
\newcommand{\real}{{\mathbb R}}
\begin{document}
%
\doi{}

\isbn{}



%

\title{Constrained Multi-Slot Optimization for Ranking Recommendations}
%
%
%
%
%

\numberofauthors{3} 
%
\author{
%
%
\alignauthor
Kinjal Basu\\
       \affaddr{Department of Statistics}\\
       \affaddr{Stanford University}\\
       \affaddr{Stanford, CA USA}\\
       \email{kinjal@stanford.edu}
\alignauthor
Shaunak Chatterjee \\
       \affaddr{LinkedIn Corporation}\\
       \affaddr{Mountain View, CA}\\
       \email{shchatterjee@linkedin.com}
\alignauthor Ankan Saha\\
       \affaddr{LinkedIn Corporation}\\
       \affaddr{Mountain View, CA}\\
       \email{asaha@linkedin.com}
}

\maketitle
\begin{abstract}
Ranking items to be recommended to users is one of the main problems
in large scale social media applications. This problem can be set up
as a multi-objective optimization problem to allow for trading off
multiple, potentially conflicting objectives (that are driven by those
items) against each other. Most previous approaches to this problem
optimize for a single slot without considering the interaction effect
of these items on one another.

In this paper, we develop a constrained multi-slot optimization
formulation, which allows for modeling interactions among the items on
the different slots. We characterize the solution in terms of problem
parameters and identify conditions under which an efficient solution
is possible. The problem formulation results in a quadratically
constrained quadratic program (QCQP). We provide an algorithm that
gives us an efficient solution by relaxing the constraints of the QCQP
minimally. Through simulated experiments, we show the benefits of
modeling interactions in a multi-slot ranking context, and the speed
and accuracy of our QCQP approximate solver against other state of the
art methods.
    
\end{abstract}

%
%
\begin{CCSXML}
<ccs2012>
<concept>
<concept_id>10003752.10003809.10003716</concept_id>
<concept_desc>Theory of computation~Mathematical optimization</concept_desc>
<concept_significance>500</concept_significance>
</concept>
<concept>
<concept_id>10003752.10003809.10003716.10011138.10011139</concept_id>
<concept_desc>Theory of computation~Quadratic programming</concept_desc>
<concept_significance>300</concept_significance>
</concept>
<concept>
<concept_id>10002951.10003260.10003261.10003267</concept_id>
<concept_desc>Information systems~Content ranking</concept_desc>
<concept_significance>300</concept_significance>
</concept>
</ccs2012>
\end{CCSXML}

\ccsdesc[500]{Theory of computation~Mathematical optimization}
\ccsdesc[300]{Theory of computation~Quadratic programming}
\ccsdesc[300]{Information systems~Content ranking}
%
%

%
%
\printccsdesc

\keywords{Large scale multi objective optimization, Multi-slot optimization, feed ranking, recommendation systems}

\section{Introduction}
\label{sec:intro}










Ranking of items on a recommendation platform has become one of the
most important problems in most internet and social media
applications. Popular examples include the feed on most social media
applications like Facebook, LinkedIn and Instagram (Figure \ref{fig:feeds}). The People You May
Know (PYMK) application on LinkedIn which recommends professional
connections to the user, email digests with various news articles that
can be interesting to the user are other such examples (Figure \ref{fig:apps}). All of these
involve showing a list of ranked items from a larger set of candidates
to the user.  As the scale of these applications becomes progressively
larger with time, they have more data to optimize and provide a better
user experience. In the case of feed ranking at LinkedIn \cite{agarwal2014activity, agarwal2015personalizing} for example,
there can be multiple items of different types (articles shared by
first degree connections, status updates, connection updates, job
anniversaries to name a few) all of which need to be consolidated to
generate a ranked list which should be the most engaging to the
user. Due to the cost of optimization of most ranking objectives in
machine learning, scaling ranking methods to very large data sets
becomes difficult \cite{burges2011learning,valizadegan2009learning}. In most real life applications, these ranked
recommendations are generated by separately sorting each of these
classes of items (via predicted estimates of their click-through rate
(CTR) or some appropriate notion of engagement) and then mixing them
together using an algorithm.

At the same time the businesses operating these applications also aim
to grow by potentially trading off immediate user engagement (via CTR)
with monetization metrics like revenue from native ads and/or user
retention. This necessitates a formulation where multiple such
objectives can be traded off efficiently while trying to provide a
balanced user experience. There has been a plethora of research
addressing this problem, commonly known as Multi-Objective
Optimization (MOO)
\cite{agarwal2015constrained, agarwal2012personalized, basu_user, deb2014multi, konak2006multi, marler2004survey}, 
both in theory and practice.
\begin{figure*}
\centering
\begin{minipage}[b]{0.48\textwidth}
\begin{subfigure}[Linkedin]{
  \includegraphics[scale=0.062]{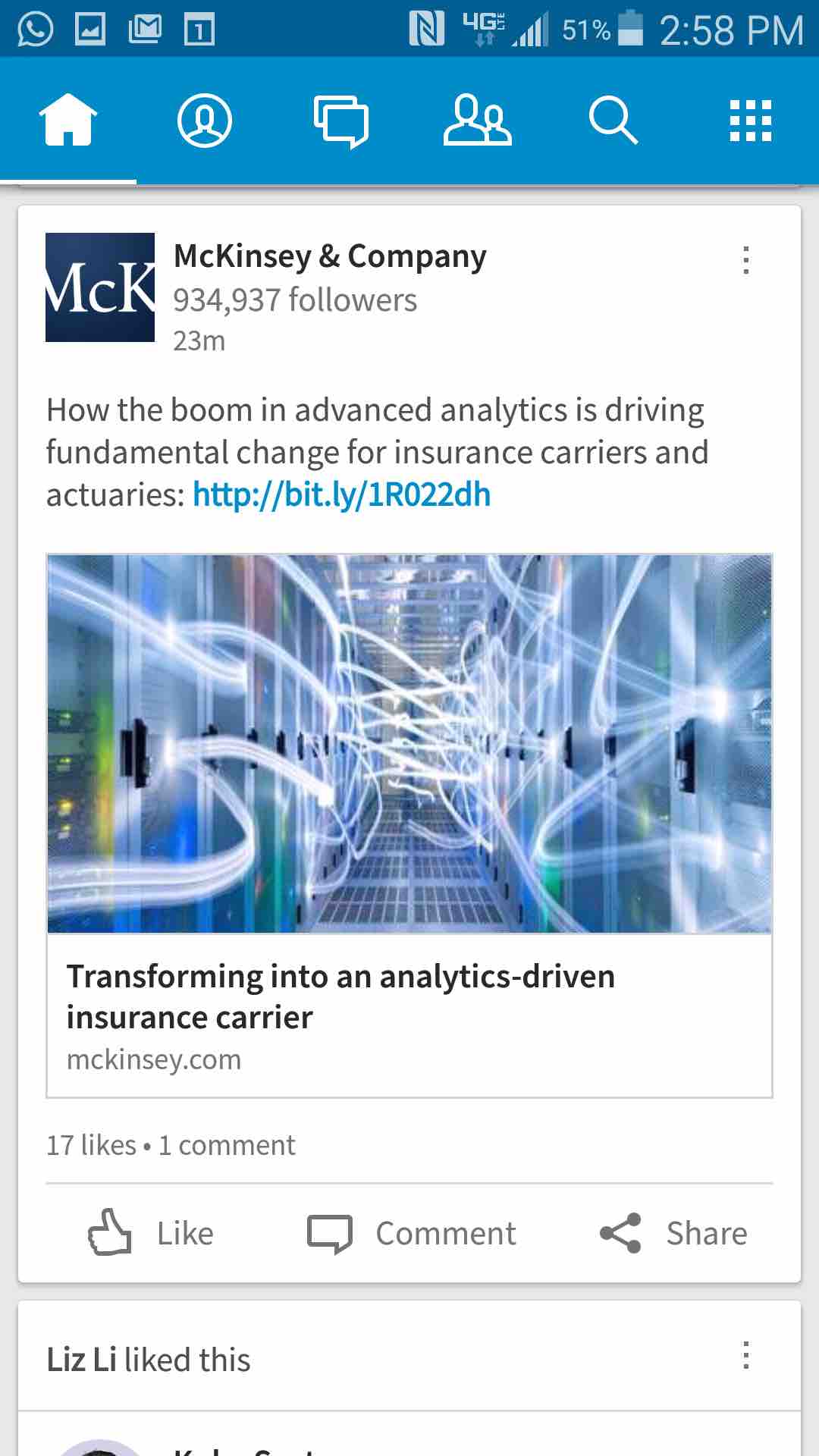}
  \label{fig:linkedin}}
\end{subfigure}
\begin{subfigure}[Facebook]{
  \includegraphics[scale=0.062]{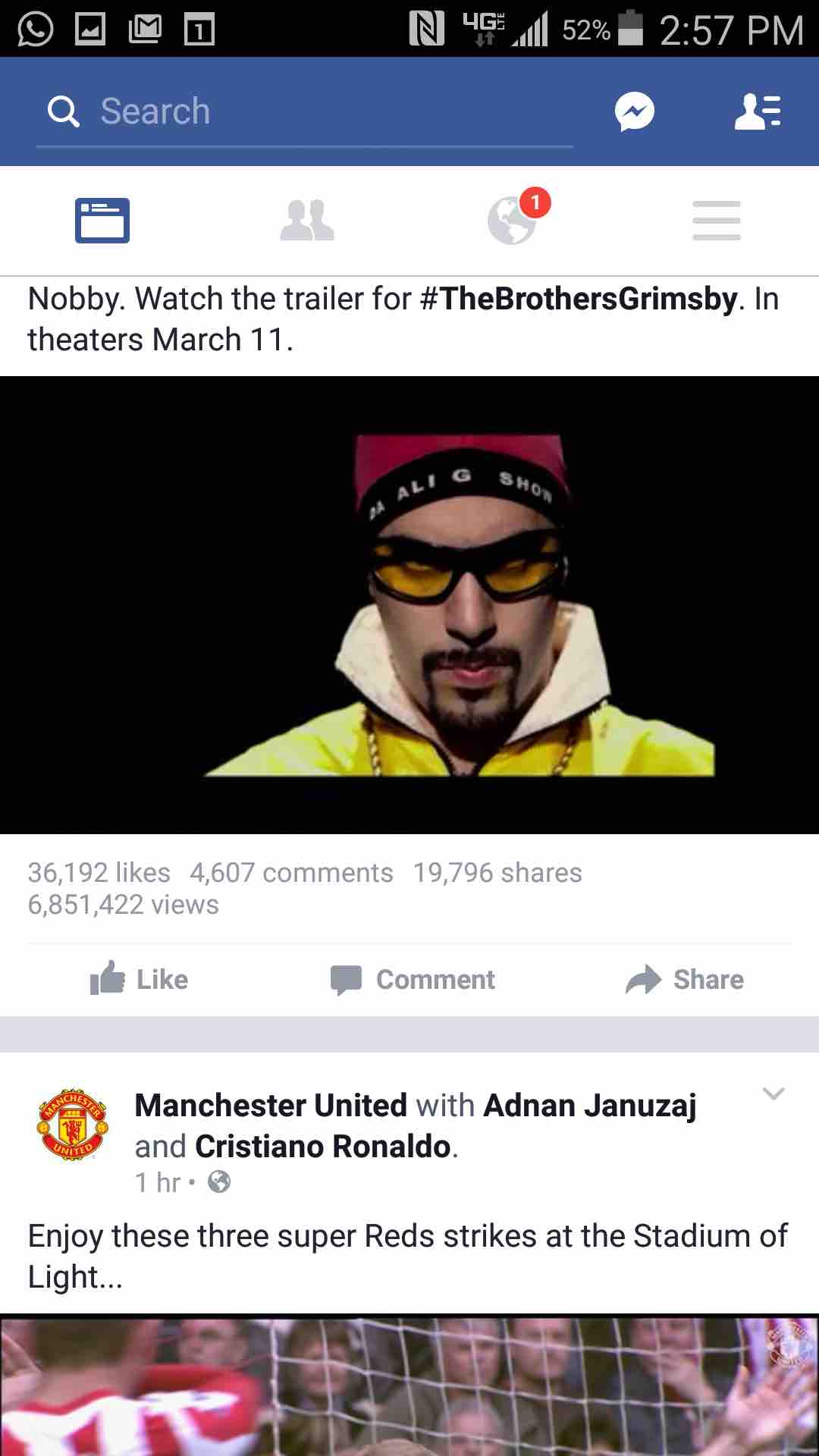}
  \label{fig:facebook}}
\end{subfigure}
\begin{subfigure}[Instagram]{
  \includegraphics[scale=0.062]{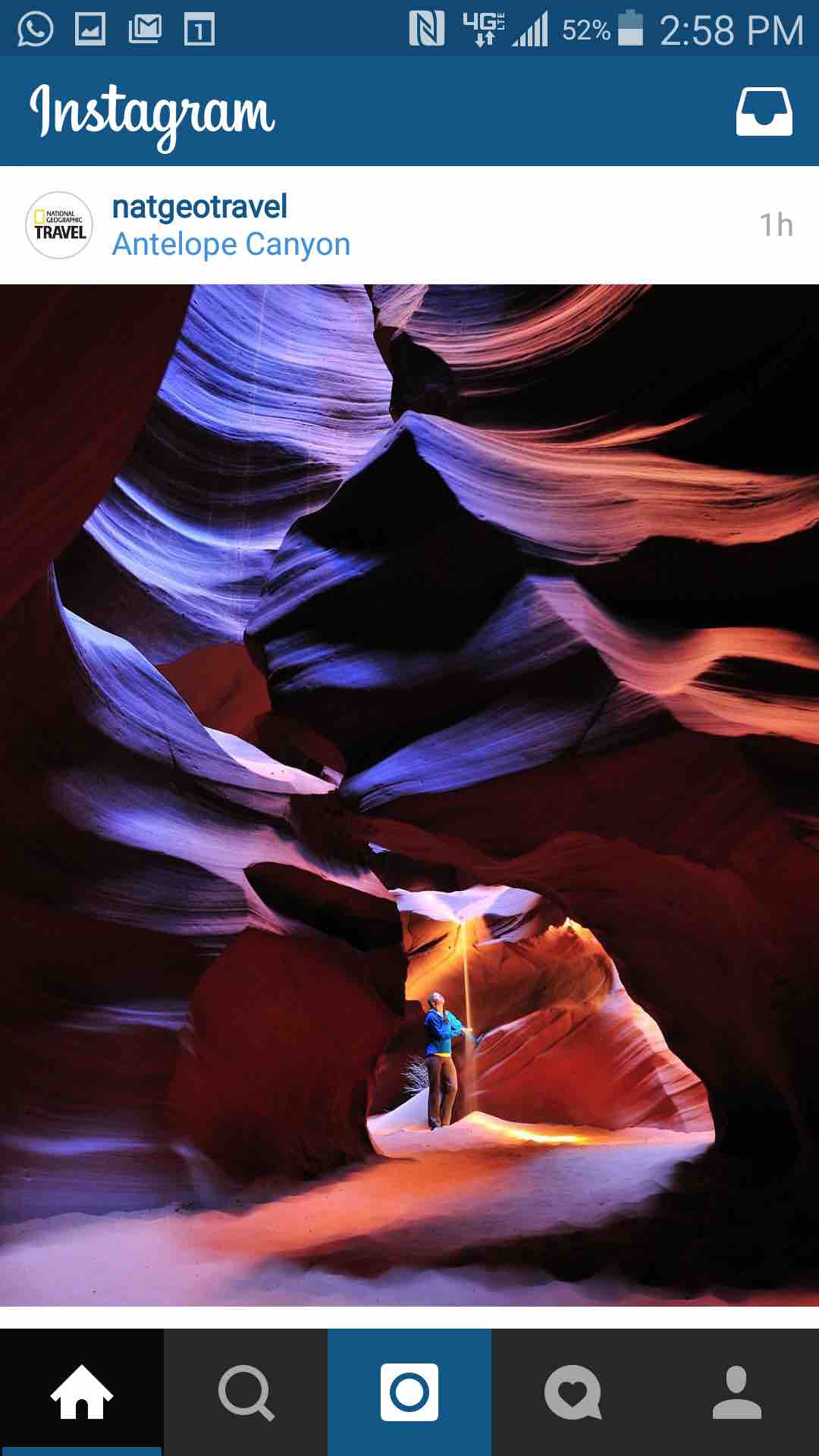}
  \label{fig:instagram}}
\end{subfigure}
\caption{ 
(News) Feed from various social networks. Each of them has
recommendations ranked in some order over multiple slots}
\label{fig:feeds}
\end{minipage}\quad
\begin{minipage}[b]{0.48\textwidth}
\centering
\begin{subfigure}[PYMK]{
  \includegraphics[scale=0.062]{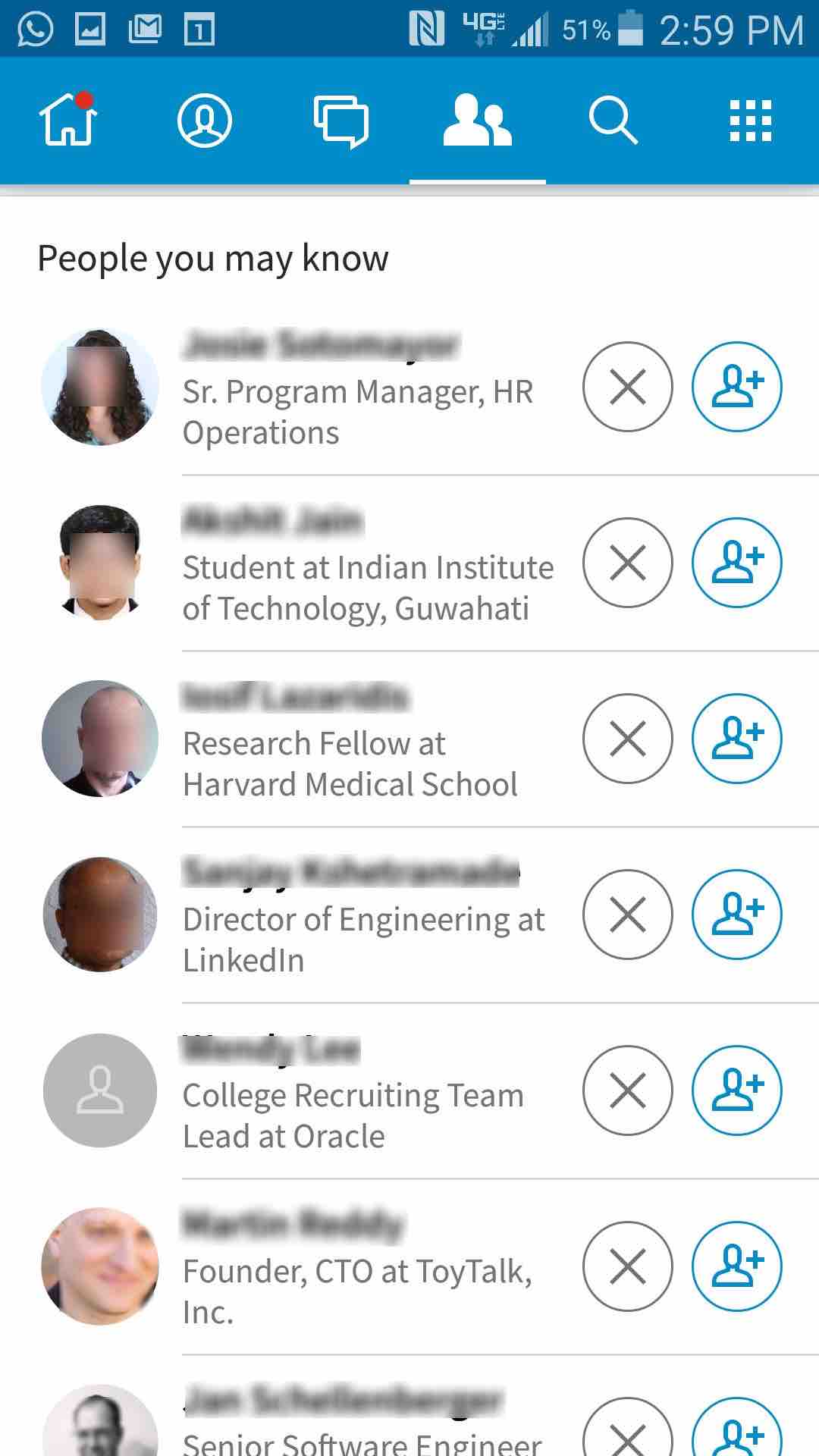}
  \label{fig:pymk}}
\end{subfigure}
\begin{subfigure}[Email with Job postings]{
  \includegraphics[scale=0.062]{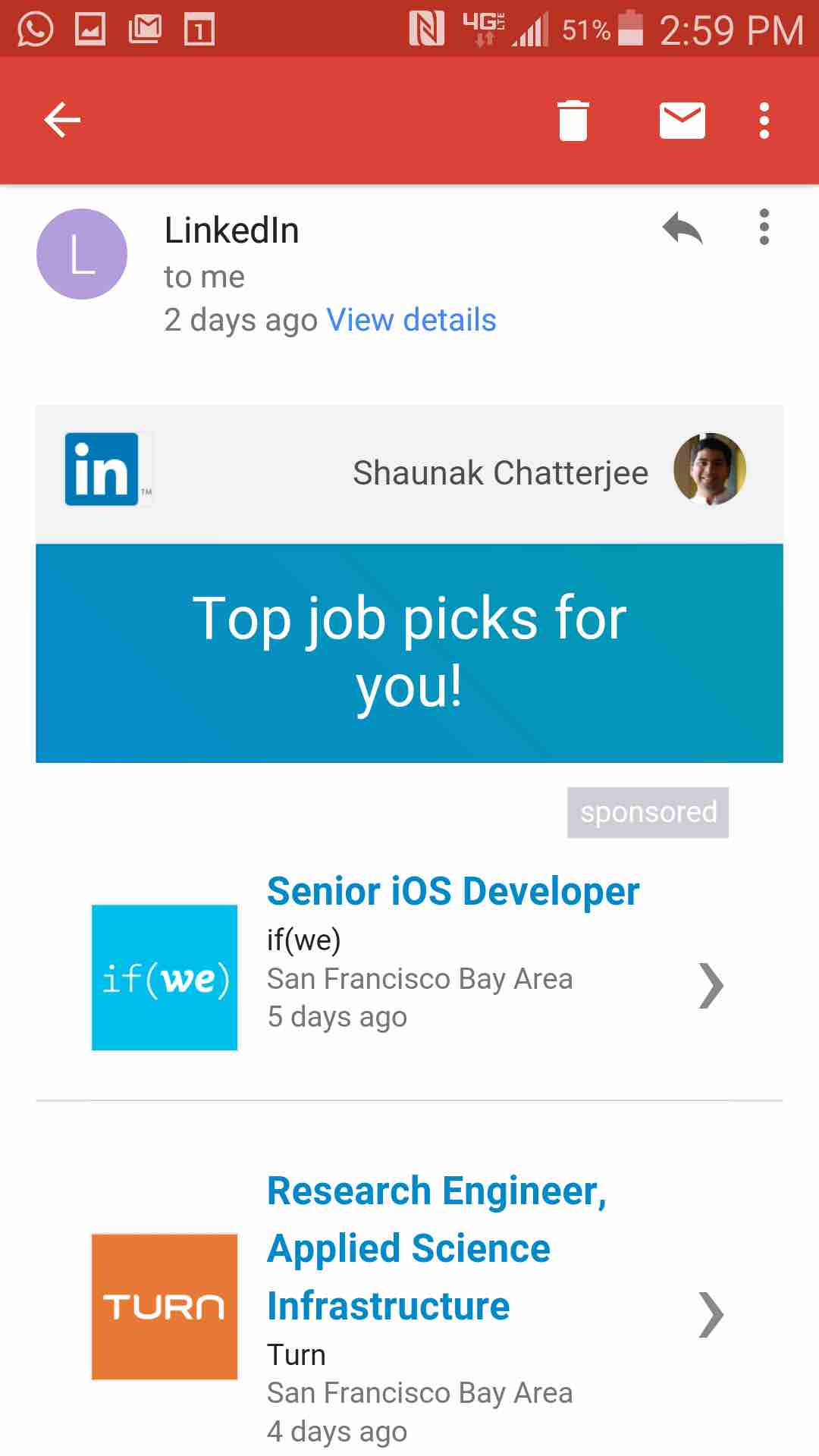}
  \label{fig:email}}
\end{subfigure}
\caption{ 
Other applications including People Recommendation and Emails with job
recommendations, having a ranking of items over multiple slots.}
\label{fig:apps}
\end{minipage}
\end{figure*}


However most of the practical, scalable approaches \textbf{consider
generating scores for each candidate item for a single slot, rather
than considering a global optimization for the entire list of
candidate items}. As a result, they fail to take into account the
interaction effect these items can have on the user while occuring in
multiple slots. Consider the example of a user being shown two similar
items, both of which are separately very relevant to her, but if we
rank them consecutively at the top of her feed, the user is less
likely to click on the second item and the user experience
suffers. These ideas can also be extended to cover approaches like
impression discounting of a group of items (which governs that a user
should not see too many impressions of the same type on consecutive
visits to the feed) as well as diversity of items (which governs that
items of the same type should not occur together). The importance of context of an
item gets magnified on the mobile screen due to the limited attention span and display space.


In this paper, we try to address this class of problems by coming up
with a formulation which allows for this MOO trade off to
be set up as a constrained optimization problem taking into account
interactions between multiple slots along the lines
of \cite{agarwal2012personalized}. Our main contributions can be
summarized as following:
\begin{itemize}
 \item We provide an efficient solution for the MOO problem on the
 feed (without assuming multi-slot dependency) by setting it up as a
 QP and optimizing the dual of the QP. Although this method has been
 explored in \cite{agarwal2012personalized}, we can handle much more
 general classes of constraints and provide a trick of efficiently
 obtaining the primal optimum from the dual by evaluating a cheap
 projection. This enables us to come up with a deterministic serving
 plan for serving items on the feed in a sorted order to maximize some
 notion of user engagement while also keeping other business metrics
 above accepted thresholds.

 \item We formulate the problem of MOO in the multi-slot case,
 where we allow for interactions between the different items. We identify specific interaction models, mathematical conditions and assumptions which result in the problem being a QCQP. Without those, the problem is much more computationally intractable. We show that solving this QCQP results  in a far superior feed than without modeling the
 multi-slot dependence.

\item One of the big challenges is to solve a large scale QCQP in an
efficient manner. We devise an algorithm which approximates the
quadratic constraints by a set of linear constraints, thus converting the problem
into a quadratic program (QP) whose solution is a pretty close
approximation for the original optimum. We show results of convergence
as well as experiments comparing our algorithm to existing QCQP solvers
and how fast it can scale to large scale data. To the best of our
knowledge, this technique is new and has not been previously explored
in the optimization literature.
\end{itemize}

The rest of the paper is structured as follows. In Section \ref{sec:problem_def}, we introduce the problem of feed ranking as a multi-objective optimization problem. Section \ref{sec:soln_prob} gives an efficient solution of the posed problem. In Section \ref{sec:modeling_interaction}, we describe the interaction model and the several characterizations which translate the original QP into a QCQP. We develop a new technique to solve the large scale QCQP in Section \ref{sec:sol_qcqp}. Experimental results follow in Section \ref{sec:expt_results} and we conclude with some discussion in Section \ref{sec:disc}.





\section{Feed Ranking as a MOO Problem}
\label{sec:problem_def}

We begin by introducing some notation which we use throughout the
paper. Let $i = 1,\ldots, n$ denote the $i$-th user, $j = 1, \ldots,
J$ denote the $j$-th item to be shown to the user and $k = 1,\ldots,
K$ denote the $k$-th slot on the feed. Let $x_{ijk}$ be the
serving probability of item $j$ at slot $k$ to user $i$ and $p_{ijk}$
denote the probability of clicking item $j$ at slot $k$ by user $i$
conditional on the fact that $j$-th item was shown at position $k$ to
user $i$.

We allow both organic and sponsored items on the feed. Let $c_{ijk}$ be the dollar value associated to clicking item $j$ and $\Jcal_s$ denote the set of sponsored items. We
will assume throughout that the revenue generated from a sponsored item does
not depend on the position of the item, i.e. $c_{ijk} = c_j > 0$ for all
$i, k$ and $j \in \Jcal_s$. We also assume that $c_{i,j,k}= 0$ for $j \not\in \Jcal_s$. We introduce the vector notation for
$x_{ijk}, p_{ijk}, c_{ijk}$ as $\xb, \pb$ and $\cbb$ respectively. Further
let $\dolb = \pb\cdot \cbb$ where $\cdot$ denotes the element wise
multiplication of the two vectors.

We set up the problem of feed ranking in the presence of sponsored
items as a constrained optimization problem. This formulation has been
familiar in the literature \cite{agarwal2015constrained, agarwal2012personalized}
 and targets mainly at maximizing the overall
click-through rate (CTR) while simultaneously trying to maximize
revenue by showing sponsored content on the feed. These aspects can
often conflict with each other (since organic items on the feed can
have higher CTR than sponsored items) which leads to the modified
problem of maximizing the CTR under constraints imposed by business
rules, in terms of maintaining the revenue beyond a certain threshold,
showing more than a certain number of posts of a specific type
(e.g. news articles) among others. 

These requirements can be met by setting up a multi-objective
optimization (MOO) problem. Specifically, we wish to maximize the expected
clicks on the feed while keeping the revenue above a certain level and
the total number of impressions at a certain level. Under the above
notation, the expected number
of clicks is given by 
\begin{align*}
  \mathbb{E}(\text{Clicks}) = \sum_{i,j,k} x_{ijk}p_{ijk}.
\end{align*}
while the two constraints can be written as
\begin{align*}
\sum_{i,j,k} x_{ijk}p_{ijk}c_{ijk} &\geq R \\
\sum_{j \in \Jcal_I} \sum_{i,k} x_{ijk} &\geq I \qquad 
\end{align*}
where $R$ and $I$ refer to business specified thresholds while $\Jcal_I$
defines a subset of items whose impressions are required to be more
than a specified amount. Further, we know that since $x_{ijk}$ is a
probability we have $0 \leq x_{ijk} \leq 1$. Moreover, we know that
for each slot $k$ we show an item. Hence, $\sum_{j=1}^J x_{ijk} = 1$
for all $i, k$. And if we sum across $k$ we get the probability of
showing the $j$-th item. Thus we have $ 0 \leq \sum_{k=1}^K x_{ijk}
\leq 1$ for all $i,j$. Combining all of these together we can
formulate the problem of maximizing the clicks on the multi-slot case
as follows.
\begin{equation}
\label{orig_problem}
\begin{aligned}
& \underset{x}{\text{Maximize}} & & \sum_{i,j,k} x_{ijk}p_{ijk} \\
& \text{subject to} & &\sum_{i,j,k} x_{ijk}p_{ijk}c_{ijk} \geq R \\
&  &  &\sum_{i,j,k} x_{ijk} d_{ijk} \geq I \\
& & & 0 \leq x_{ijk} \leq 1 \; \forall \; i,j,k \\
& & & \sum_{j=1}^J x_{ijk} = 1, \; \forall i,k \\
& & & 0 \leq \sum_{k=1}^K x_{ijk} \leq 1 \; \forall i,j,
\end{aligned}
\end{equation}
where $d_{ijk} = 1$ for $j \in \Jcal_I$ and zero otherwise. 

Infact, for each user $i$ the last three constraints create a local
constraint and we rewrite the equations as $\Kb_i \bsx_i \leq
b^i$. Replacing the maximization to a minimization problem (to exploit
the convexity of the objective) and introducing a strongly convex
regularization term for ease of optimization (See
\cite{agarwal2012personalized} for details) we get the following
optimization problem,

\begin{equation}
\label{eq:orig_qp}
\begin{aligned}
& \underset{x}{\text{Minimize}} & & -\xb^T\pb + \frac{\gamma}{2}\xb^T\xb \\
& \text{subject to} & & \xb'\dolb \geq R \\
&  &  &\xb'\db \geq I\\
& & & \bsx_i \in \Kcal_i \; \forall \; i
\end{aligned}
\end{equation}
where $\xb = (\bsx_1, \ldots, \bsx_n)$ and $\Kcal_i$ denotes the convex set created by the linear equations, $\Kb_i \bsx_i \leq b^i$.

The main hurdle in optimizing \eqref{eq:orig_qp} is the prohibitively
expensive cost arising as the number of users $n$ is in the order of
millions for most real life web applications. Thus, instead of solving
it directly we try to evaluate the dual variables corresponding to the
first two global constraints as
in \cite{agarwal2012personalized}. In \cite{basu_user}, the authors
show that if we have the optimal dual variables then it possible to
get the optimal primal solution in an online setting efficiently by
solving a small quadratic programming problem.

\section{An Efficient Solution}
\label{sec:soln_prob}
Instead of solving the optimization problem \eqref{eq:orig_qp} directly,
we employ a two step procedure as in
\cite{agarwal2012personalized}. In the first stage we solve the dual
problem of \eqref{eq:orig_qp} to get optimal dual variables.  Using
the dual variables we obtain the primal solution through a neat
conversion trick. Finally, we describe how to get a deterministic
serving plan from the probabilistic optimal primal solution. Each step
of this procedure is described below.

\subsection{Dual Solution}
The local constraints $\bsx_i \in \Kcal_i$ for all $i = 1, \ldots, n$ can be
combined together to get a constraint like $\Kb\xb \leq \bb$, where $\Kb =
diag(\Kb_i, i = 1,\ldots, n)$. Thus with this notation we can transform
the problem \eqref{eq:orig_qp} as
\begin{equation}
\label{dual_sol}
\begin{aligned}
& \underset{x}{\text{Minimize}} & & -\xb^T\pb + \frac{\gamma}{2}\xb^T\xb\\
& \text{subject to} & & \xb^T\dolb \geq R \\
&  &  &\xb^T\db \geq I\\
& & & \Kb\xb \leq \bb
\end{aligned}
\end{equation}
Writing the Lagrangian of the above problem, we have
\begin{align*}
L(\xb, \mu_0, \mu_1, \etab ) &= -\xb^T\pb + \frac{\gamma}{2}\xb^T\xb + \mu_0(R -
\xb^T\dolb)\nonumber \\ & \;\;\; + \mu_1(I - \xb^T\db) + \etab^T(\Kb\xb - \bb).
\end{align*}
Finding the minimum with respect to $x$ we see get,
\begin{align*}
\hat{\xb} = \frac{1}{\gamma}\left(\mu_0\dolb + \mu_1\db + \pb - \Kb^T\etab \right.).
\end{align*}
Plugging this back into the Lagrangian we get,
\begin{align*}
L(\hat{\xb}, \mu_0, \mu_1, \etab ) &= \frac{\gamma}{2}\hat{\xb}^T\hat{\xb}
-\hat{\xb}^T(\mu_0\dolb + \mu_1\db + \pb - \Kb^T\etab )\\ &\;\;\; + \mu_0R +
\mu_1I - \etab^T\bb \\ & = -\frac{\gamma}{2}\hat{\xb}^T\hat{\xb} + \mu_0R +
\mu_1I - \etab^T\bb.
\end{align*}
Writing $\xib = (R,I,-\bb^T)^T, \;\; \Ab = (\dolb : \db : -\Kb^T)$ and $\yb =
(\mu_0, \mu_1, \etab)$ the above problem simplifies to
\begin{equation}
\label{dual_sol_simplified}
\begin{aligned}
& \underset{\yb}{\text{Maximize}} & & \xib^T\yb -\frac{1}{2\gamma}(\pb +
  \Ab\yb)^T(\pb + \Ab\yb)\\ & \text{subject to} & & \yb \geq 0
\end{aligned}
\end{equation}
which can be further simplified to 
\begin{equation}
\label{dual_sol_final}
\begin{aligned}
& \underset{y}{\text{Minimize}} & & \frac{1}{2}\yb^T\Mb\yb - \yb^T(\xib -
  \Ab^T\pb/\gamma)\\ & \text{subject to} & & \yb \geq 0
\end{aligned}
\end{equation}
where $\Mb = \Ab^T\Ab/\gamma$. We solve this by the operator splitting
algorithm \cite{parikh2014block} to obtain $\mu_0$ and $\mu_1$. 

\begin{remark} 
  We cannot apply block splitting algorithm \cite{parikh2014block} to
  the dual problem since the objective function is no longer
  separable. We may apply it to the primal but obtaining the dual
  solution may be not be possible in all situations. For details, we
  refer to \cite{basu_user}.
\end{remark}

The ability to solve this problem on a large scale completely depends
on the sparsity of matrix $\Mb$.

\begin{definition}
Let $\Ab$ be any $m\times n$ matrix. We define the sparsity ratio,
$\psi(\Ab)$ as $k$ if the number of non-zero entries in the matrix is
$kmn$.
\end{definition}

From previous experimentation we were able to solve the dual problem
having $n$ variables through operator splitting if $\psi(\Mb)$ is of
order $O(1/n)$. We shall show in this setup too, $\psi(\Mb)$ is of the
same order.


\begin{lemma}
\label{lem:sparse}
Consider the dual problem for the multi-slot optimization problem as
given in \eqref{dual_sol_final}. Let $\Jcal_s$ and $\Jcal_I$ denote
the sets of sponsored and impression important items. If there are $K$
slots, $J$ total items and $n$ users, then,
\[
\psi(\Mb) = \frac{1 + n(J + \beta + K(3 + \beta) + 7JK)}{(1+ nJ + nK + nJK)^2}.
\]
where $\beta = |\Jcal_s| + |\Jcal_I|$.
\end{lemma}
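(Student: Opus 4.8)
The plan is to compute the denominator and the numerator of $\psi(\Mb)$ separately, using throughout that $\Mb=\Ab^{T}\Ab/\gamma$ with $\Ab=(\dolb:\db:-\Kb^{T})$.

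\emph{Denominator.} The matrix $\Mb$ is square, of order equal to the length of the dual vector $\yb=(\mu_0,\mu_1,\etab)$, i.e. $2$ plus the number of rows of $\Kb=\diag(\Kb_1,\dots,\Kb_n)$. Each block $\Kb_i$ bundles the per-user constraints from \eqref{orig_problem}: the $JK$ box constraints on the $x_{ijk}$, the $K$ simplex constraints $\sum_j x_{ijk}=1$, and the $J$ budget constraints $\sum_k x_{ijk}\le 1$. Summing these over the $n$ users gives $\dim\Mb = 1+nJ+nK+nJK$, so $\Mb$ has exactly $(1+nJ+nK+nJK)^2$ entries, which is the denominator.

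\emph{Numerator.} The workhorse is the identity $(\Ab^{T}\Ab)_{rs}=\langle\Ab_{\cdot r},\Ab_{\cdot s}\rangle$, together with the fact that this inner product is nonzero precisely when the supports of columns $r$ and $s$ of $\Ab$ intersect: on any overlap the summed products are either a single $\pm1$ paired with an entry of $\dolb$, $\db$, or another constraint row, or else a one-signed sum, so there is never cancellation. I would then split the columns of $\Ab$ into five groups — the revenue column $\dolb$ (support $\{(i,j,k):j\in\Jcal_s\}$), the impression column $\db$ (support $\{(i,j,k):j\in\Jcal_I\}$), the $nJK$ box columns (each supported on a single triple $(i,j,k)$), the $nK$ simplex columns (each supported on $\{(i,\cdot,k)\}$), and the $nJ$ budget columns (each supported on $\{(i,j,\cdot)\}$) — and, for each of the $\binom{5}{2}+5$ block pairs, count the column pairs whose supports meet. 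The local groups interact only inside a single user and only through one shared coordinate at a time, which yields the $O(nJK)$ bulk (the $7JK$ coefficient) plus the $\dim\Mb$ diagonal entries; the two global columns meet $O(nK\beta)$ of the box columns, all of the simplex columns, and $O(n\beta)$ of the budget columns, which accounts for the terms carrying $\beta=|\Jcal_s|+|\Jcal_I|$ and the extra multiples of $nK$. Adding the block counts and dividing by $(1+nJ+nK+nJK)^2$ gives the stated expression; in particular the numerator is $\Theta(n)$ and the denominator $\Theta(n^2)$, so $\psi(\Mb)=O(1/n)$, as promised before the lemma.

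\emph{Main obstacle.} There is no analytic subtlety beyond the no-cancellation remark; the real work is the combinatorial bookkeeping. One must enumerate every block interaction carefully, keep the ordered off-diagonal pairs separate from the diagonal, fix a single convention for representing the equality $\sum_j x_{ijk}=1$ (one free multiplier versus two sign-constrained ones), and specify how $\Jcal_s$ and $\Jcal_I$ overlap — the clean closed form corresponds to taking $\Jcal_s$ and $\Jcal_I$ nonempty and disjoint. Once the block table is filled in, the constants $7$ and $3$ and the dependence on $\beta$ can be read off directly.
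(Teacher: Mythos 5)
Your overall strategy --- counting the nonzeros of $\Mb=\Ab^{T}\Ab/\gamma$ block by block and noting that overlapping columns never cancel --- is the same as the paper's, but two concrete points in your accounting would prevent you from arriving at the stated formula. The first is the denominator. The paper represents every per-user two-sided or equality constraint as a pair of inequalities, so that $\Kb_i^{T}=\bigl[\Ib_{JK},\,-\Ib_{JK},\,\Bb^{T},\,-\Bb^{T},\,\Cb^{T},\,-\Cb^{T}\bigr]$ and $\Mb$ has order $2+2n(JK+K+J)$, hence $(2+2nJ+2nK+2nJK)^{2}=4(1+nJ+nK+nJK)^{2}$ entries; the displayed denominator is this count divided by $4$, and the same factor of $4$ cancels from the nonzero count, which is $4\bigl[1+n(J+\beta+K(3+\beta)+7JK)\bigr]$. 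You instead take the order of $\Mb$ to be $1+nJ+nK+nJK$ and treat the denominator as its square, but this is not consistent even with your own listing: two global rows plus $JK+K+J$ single-counted rows per user gives order $2+n(JK+K+J)$, not $1+nJ+nK+nJK$, and under that single-counted convention the block counts do not reproduce the lemma's constants --- for instance, each global constraint meets the simplex rows of all users in $4nK$ entries of $\Mb$ under the doubled representation but only $2nK$ under yours, so neither the $K(3+\beta)$ term nor the denominator would come out right. The representation of $\Kcal_i$ is therefore not a bookkeeping afterthought to be fixed later; the exact closed form is tied to the doubled representation, and its clean appearance comes from the common factor $4$, not from the matrix having dimension $1+nJ+nK+nJK$.

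The second point is that your assumption on $\Jcal_s$ and $\Jcal_I$ is the reverse of what is needed. The leading $1$ in the numerator corresponds to all four entries of the top-left $2\times 2$ block of $\Ab^{T}\Ab$ being nonzero, i.e.\ to $\dolb^{T}\db\neq 0$, which holds precisely when $\Jcal_s\cap\Jcal_I\neq\emptyset$ (this is the assumption the paper makes); if the two sets are disjoint, as you propose, the off-diagonal entries vanish and the leading term drops from $1$ to $1/2$. Finally, note that the substance of the proof is exactly the table you defer: one must verify $\mathrm{card}(\Kb_i\dolb_i)=2(K+|\Jcal_s|+K|\Jcal_s|)$, $\mathrm{card}(\Kb_i\db_i)=2(K+|\Jcal_I|+K|\Jcal_I|)$, and $\mathrm{card}(\Kb\Kb^{T})=4n(J+K+7JK)$, the last using $\mathrm{card}(\Bb\Bb^{T})=K$, $\mathrm{card}(\Cb\Cb^{T})=J$ and $\mathrm{card}(\Bb\Cb^{T})=JK$; the constants $7$ and $3$ only emerge from carrying out these counts under the correct constraint representation, so they cannot simply be read off once the wrong convention has been fixed.
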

\begin{proof}
Note that, using the previous notation we can write,
\begin{align*}
\Mb = \frac{1}{\gamma} \Ab^T\Ab = \frac{1}{\gamma} \begin{bmatrix}
    \dolb^T\dolb & \dolb^T\db & -\dolb^T\Kb^T\\
    \db^T\dolb & \db^T\db & -\db^T\Kb^T  \\
    -\Kb\dolb & -\Kb\db & \Kb\Kb^T
\end{bmatrix}.
\end{align*}
Without loss of generality we assume that
$\Jcal_s \cap \Jcal_I \neq \emptyset$. Then the top-left $2 \times 2$
principal sub matrix of $M$ contains 4 non-zero terms. Otherwise only
the diagonals are positive and we only lose a count of 2 to the total
number of non-zero terms. Note that for each $i = 1, \ldots, n$ we can
write $\Kb_i^T$ as
\begin{align*}
\Kb^T_i = \begin{bmatrix}
    \Ib_{JK} & -\Ib_{JK} & \Bb^T & -\Bb^T & \Cb^T & -\Cb^T
\end{bmatrix},
\end{align*}
where, 
\begin{align*}
\Bb = \begin{bmatrix}
    \Ib_{K} & \Ib_{K} & \ldots & \Ib_{K}
\end{bmatrix}_{K \times JK},\qquad
\Cb = \Ib_{J} \otimes 1_{K}^T,
\end{align*}
and $\Ib_n$ denotes the identity matrix of dimension $n$ and $\otimes$
denote the Kronecker product. Note that the matrices in $\Kb_i$
denotes the constraints for the $i$-user of optimization
problem \eqref{orig_problem}. Let $\mathrm{card}(\Ab)$ denotes the
number of non-zero entries in a matrix $\Ab$. Now it is easy to see
that,
\begin{align}
\label{eq:s1}
\mathrm{card}(\Kb_i \dolb_i) &= 2( K + |\Jcal_s| + K|\Jcal_s|)\\
\label{eq:s2}
\mathrm{card}(\Kb_i \db_i)  &= 2( K + |\Jcal_I| + K|\Jcal_I|).
\end{align}
To complete the proof, we note that,
\begin{align}
\label{eq:s3}
\mathrm{card}(\Ib_{JK}) &= \mathrm{card}(\Bb) = \mathrm{card}(\Cb) = JK,\\
\label{eq:s4}
\mathrm{card}(\Bb\Bb^T)  &= K, \;\;\mathrm{card}(\Cb\Cb^T)  = J, \;\;\mathrm{card}(\Bb\Cb^T)  = JK.
\end{align}
Now, using the structure is $\Kb\Kb^T$, and equations \eqref{eq:s3} and \eqref{eq:s4}, we get
\begin{align}
\label{eq:s5}
\mathrm{card}(\Kb\Kb^T) = 4n(J + K + 7JK).
\end{align}
The result follows from using \eqref{eq:s1}, \eqref{eq:s2}
and \eqref{eq:s5} and observing that the total number of entries in
the matrix $\Mb$ is $(2 + 2nK + 2nJ + 2nJK)^2$.
\end{proof}

From Lemma \ref{lem:sparse}, it is easy to see that $\psi(\Mb) =
O(1/(nJK))$. Thus, for example if we are able to solve the single slot
for 2 million users, having $K = 5$ and $J = 10$ we can solve this
multi-slot problem at the same time and same accuracy for 40000 users.

\subsection{Dual to Primal Trick}
The main idea behind this technique is to exploit the KKT conditions
of the problem, so that we can write the primal solution as a function
of the dual solution and the input parameters. To explain this
technique, let us start by rewriting the problem given in
\eqref{eq:orig_qp} as
\begin{equation}
\label{eq:dual_trick}
\begin{aligned}
& \underset{\xb}{\text{Minimize}} & & -\xb^T\pb + \frac{\gamma}{2}\xb^T\xb +
  \mathds{1}_{\Kcal_1 \times \Kcal_2 \times \ldots \times \Kcal_n}(\xb) \\ & \text{subject
    to} & & \xb^T\dolb \geq R, \\ & & &\xb^T\db \geq I,
\end{aligned}
\end{equation}
where $\Kcal_1 \times \Kcal_2 \times \ldots \times \Kcal_n$ denotes the domain of
$\xb$ and $\mathds{1}_{\Ccal}(\xb) = 0$ if $\xb \in \Ccal$ and $\infty$ otherwise. Introducing the dual variables $\mu_0$ and $\mu_1$ for the two
constraints we can write the Lagrangian as follows.
\begin{align*}
L(\xb, \mu_0, \mu_1) = -\xb^T\pb + \frac{\gamma}{2}\xb^T\xb + \mu_0(R - \xb^T\dolb) +
\mu_1(I - \xb^T\db),
\end{align*}
for $\xb \in \Kcal_1 \times \Kcal_2 \times \ldots \times \Kcal_n$. From this it is
easy to see that the optimal solution satisfies
\begin{align}
\label{primal_proj}
\xb^{opt} = \Pi_{\Kcal_1 \times \Kcal_2 \times \ldots \times \Kcal_n}
\left(\frac{1}{\gamma}(\mu_0\dolb + \mu_1\db + \pb)\right),
\end{align}
where $\Pi_{\mathcal{C}}(\cdot)$ denotes the projection function onto
$\mathcal{C}$. Note that, since we have a product domain, we can write
this as
\begin{align*}
\bsx^{opt}_i = \Pi_{\Kcal_i} \left(\frac{1}{\gamma}(\mu_0\dolb + \mu_1\db +
\pb)_i\right)\;\; \forall \;\; i = 1,\ldots,n.
\end{align*}
Thus once we know $\mu_0$ and $\mu_1$ by
solving \eqref{dual_sol_final}, $x_i$ can be obtained simply by the
projection onto $\Kcal_i$ which is a small quadratic problem and can
be solved fast and efficiently.

\subsection{Deterministic Serving Plan}
Here we present the algorithm to generate the serving plan for user
$i$ using the optimal probabilistic solution $x_{ijk}$. We know for
each slot $k$ the items follow a multinomial distribution with
probability $x_{ijk}$ for $j = 1, \ldots, J$. While serving we have
the added criteria that no two items can be repeated. Thus for each
slot $k$ we sample from $Mult( \{x_{ijk}\}_{j=1}^J)$ and we resample
if we get the same item for two slots. This guarantees that that
serving plan obeys the optimal serving distribution. The detailed
steps are written out in Algorithm \ref{algo:plan1}.

\begin{algorithm}
\caption{Deterministic Serving Plan}\label{algo:plan1}
\begin{algorithmic}[1]
\State \text{Input : Optimal primal solution $\xb$}
\State \text{Output : Deterministic serving plan for each user}
\State \text{Set $S$ empty $n \times K$ matrix} 
\For{$i = 1:n$}
\State $Sample = \emptyset$
\For{$k = 1:K$}
\State \label{sample-step} Pick $j \sim Mult( \{x_{ijk}\}_{j=1}^J)$
\If{$j \in Sample$}
\State Go to step \ref{sample-step}.
\Else
\State Add $j$ to Sample.
\EndIf
\EndFor
\State $S[i, ] = Sample$
\EndFor
\State \Return $S$
\end{algorithmic}
\end{algorithm}
\vspace{5em}

\section{Modeling Interaction}
\label{sec:modeling_interaction}
The previous sections outline a scheme to display feed contents using
a constrained optimization formulation assuming the items act
independently of each other. However in most situations, the feed
content has an interaction effect. In other words, the user's
probability of clicking on a post might change depending on the type
of items he has seen in his feed until now, particularly if he has
been displayed items from the same content group.  
In order to avoid this problem, most feed algorithms use a mixer to
mix content from different channels. In practice, for each channel or
group we have a sorted content list based on a single slot
optimization problem. The output of each channel is then mixed to
create the final feed. However, doing this clearly results in a loss
of the optimal ranking of the feed content and a sub-par feed
experience. 

One of the ways of tackling this problem is to set it up as a
multi-slot optimization problem so that we can directly optimize for
the best feed ranking under constraints while also respecting the
dependency structure among multiple slots. Note that the same idea
also comes in useful for tackling problems related to impression
discounting or diversity on the feed, which influence the ranking



In this section, we develop a technique to solve the dependency
problem as a multi-slot optimization problem instead of using ad-hoc
mixing algorithms to individual channels. Following the notation in
the previous sections, we consider the following generic optimization
problem which encompasses the objectives described before.
\begin{equation}
\label{eq:dual_sol_easy}
\begin{aligned}
& \underset{\xb}{\text{Minimize}} & & -\xb^T\pb + \frac{\gamma}{2}\xb^T\xb\\ &
  \text{subject to} & &\xb^T\rb \leq P\\ & & & \Kb\xb \leq \bb
\end{aligned}
\end{equation}
Note that $\pb$ and $\rb$ are the only two parameters in the problem
and we currently assume that they are independent of each other. (For
the more generalized formulation, see Section \ref{sec:dep_p_r}). A
lot of practical problems can be posed as above. Some examples include
multi-slot feed where $-\rb$ can be thought of as a substitute of
$\dolb$ from \eqref{eq:orig_qp}, multi-slot product email updates with
$\pb$ denoting probability of clicks and $\rb$ denoting the
probability of complaints on sending the email, the
People-You-May-Know application which is a recommended list of people
you can professionally connect with, where $\pb$ can denote the
probability of clicks while $\rb$ can represent the probability of
dismissing the recommendation. 

We model the interaction
effect as follows
\begin{align}
\label{eq:interaction}
\pb &= -\Qb_p\xb \qquad\text{ and }\qquad \rb = \Qb_r \xb.
\end{align}
where $\Qb_p$ and $\Qb_r$ are some positive definite matrices (exact
choice of which is discussed in Section \ref{sec:interact}).  Using
\eqref{eq:interaction} in \eqref{eq:dual_sol_easy} we re-write the problem as,
\begin{equation}
\label{dual_sol_QCQP}
\begin{aligned}
& \underset{x}{\text{Minimize}} & &  \xb'\left(\Qb_p + \frac{\gamma}{2}\Ib \right) \xb \\
& \text{subject to}&  & \xb'\Qb_r\xb  \leq P\\
& & & \Kb\xb \leq \bb
\end{aligned}
\end{equation}
Note that this makes the problem much more complicated since we
transform a quadratic programming problem (QP) to a quadratically
constrained quadratic program (QCQP) a general version of which is
actually NP-hard \cite{boyd2004convex}. Moreover the choice of $\Qb_p$
and $\Qb_r$ is extremely important since convexity of the problem
depends on $\Qb_p, \Qb_r$ being positive definite.

\subsection{Choice of $\Qb_p$ and $\Qb_r$}
\label{sec:interact}
The choice of these matrices is a hard problem in practice since we do
not know the exact dependency structure of $\pb$ and $\rb$ on
$\xb$. However, due to the presence of enormous empirical data for
clicks as well as complaints and dismisses, estimation of
$\Qb_p, \Qb_r$ is not a hard problem, if there exists a parametric
form. Since $\Qb_p$ and $\Qb_r$ can have similar parametric form,
without loss of generality we explain the form through $\Qb_p$.

Throughout we assume $\Qb_p = Diag( \Qb_i, i = 1, \ldots, n)$,
i.e. the probabilities across the users are independent and each
$\Qb_i$ is a $JK
\times JK$ dimensional positive definite matrix which we parametrize
below. We assume that all $K$ slots are viewable by a user, and the
chance of an event at any slot depends on all the rest. Moreover we
hide the $i$ in the subscript for notational simplicity.

Let us begin by introducing some more notation. For any $j \in \{1,
\ldots, J\}$, let $\pb_j, \xb_j$ denote $K$-dimensional vectors
comprising of elements $\{p_{jk}\}_{k=1}^K$ and $\{x_{jk}\}_{k=1}^K$
respectively. Let $\Qb_{jj'}$ denote the $K \times K$ dependence
matrix between event $j$ and observation $j'$. This leads to the
following expression
\begin{align*}
\pb = \left(
  \begin{tabular}{c}
  $\pb_1$\\
  $\pb_2$ \\
  $\vdots$\\
  $\pb_J$ 
  \end{tabular}
\right) = 
\begin{bmatrix}
    \Qb_{11} & \Qb_{12} & \Qb_{13} & \dots  & \Qb_{1J} \\
    \Qb_{21} & \Qb_{22} & \Qb_{23} & \dots  & \Qb_{2n} \\
    \vdots & \vdots & \vdots & \ddots & \vdots \\
    \Qb_{J1} & \Qb_{J2} & \Qb_{J3} & \dots  & \Qb_{JJ}
\end{bmatrix}
 \left(
  \begin{tabular}{c}
  $\xb_1$\\
  $\xb_2$ \\
  $\vdots$\\
  $\xb_J$ 
  \end{tabular}
  \right)
\end{align*}

Now we characterize each $\Qb_{jj'}$. Note that for any $j,k$, $p_{jk}$
is the probability of an event conditional of the fact that $x_{jk} =
1$. Thus, it depends on $\xb_j$ only through $x_{jk}$. Hence it is easy
to see that for any $j \in \{1, \ldots, J\}$,
\begin{align}
\label{eq:q_diag}
\Qb_{jj} = \tilde{p}_j \Ib_K
\end{align}
where $\tilde{p}_j$ is the prior event probability conditional on
seeing item $j$ irrespective of its position, and $\Ib_K$ is the $K
\times K$ identity matrix. The values of $\tilde{p}_j$ for $j = 1,
\ldots, J$ can be estimated from empirical data.

Now for any $j \neq j'$, we consider the dependency between the event
corresponding to item $j$ and the observation of $j'$. When we are
considering $p_{jk}$, since we know that $x_{jk} = 1$, we must have
that the contribution from $x_{j'k} = 0$ for $j' \neq j$. Moreover,
that is the only coefficient which is zero, since we let $p_{jk}$
depend on $x_{j'k'}$ for $k' \neq k$. Thus, we have,

\begin{align}
\label{eq:q_off_diag}
\Qb_{jj'} = \begin{bmatrix}
    0 & a_{12} & a_{13} & \dots  & a_{1K} \\
    a_{21} & 0 & a_{23} & \dots  & a_{2K} \\
    \vdots & \vdots & \vdots & \ddots & \vdots \\
    a_{K1} & a_{K2} & a_{K3} & \dots  & 0
    \end{bmatrix}.
\end{align}
We estimate each of the $a_{\ell\ell'}$ using empirical data. Finally
to bring in symmetry in the problem, we assume that
\begin{align}
\label{eq:q_sym}
\Qb_{j'j} = \Qb_{jj'}^T
\end{align}
for all $j$ and $j'$. This condition will often be true for similar items, or for dissimilar items which have similar effects on one another (whether mutually synergistic or antagonistic). However, we also acknowledge that there will be certain cases where this condition will not hold, and that will make the problem non-convex. 

Combining the structure in equations \eqref{eq:q_diag},
\eqref{eq:q_off_diag} and \eqref{eq:q_sym}, we get the complete
parametric form of the matrix $\Qb$.

\subsection{Practical considerations}

If we assume the parametric form of $\Qb$ as given in Section
\ref{sec:interact}, the postive-definiteness of $\Qb$ is a concern since
we estimate each of the $a_{\ell\ell'}$ from the data. An easy fix
to this problem, is to add $\eta \Ib_{JK}$ to $\Qb$ for some appropriate
value of $\eta$. This enforces that $\Qb$ to be positive-definite by
only slightly increasing the elements corresponding to
$\tilde{p}_{j}$. Thus instead of using the estimated $\Qb$ directly we
use,

\begin{align}
\label{eq:tilde_Q}
\tilde{\Qb} := \begin{cases}
\Qb + (-\lambda_1(\Qb) + \epsilon)I_{JK} & \text{ if } \lambda_1(\Qb) < 0\\
\Qb & \text{ otherwise}
\end{cases}
\end{align}
where $\lambda_1(\Qb)$ denotes the minimum eigenvalue of $\Qb$ and
$\epsilon > 0$ is arbitrary.

An advantage of using this technique from a computational point of
view is that since each $\Qb$ is of small dimension ($JK$ being small),
the minimum eigenvalue calculation can be done extremely fast and can
be made highly parallel for different users $i$. Secondly, this
guarantees that $\Qb_p = Diag(\tilde{\Qb}_i, i = 1,\ldots,n)$ is also
positive-definite.

\subsection{Dependence of Parameters}
\label{sec:dep_p_r}
Whenever we are modeling the interaction effect, we have assumed that
vectors $\pb$ and $\rb$ are independent. However, there may arise problems
where that is not the case. For example, if we consider our original
problem \eqref{eq:orig_qp}, then $\rb = \dolb = \pb\cdot\cbb$ and hence they are
clearly not independent. Here, it is clear that if we construct
$\Qb_r = \Qb_p \cdot \cbb$, where $\Qb_p \cdot \cbb$ is a matrix whose $\ell$-th row is
the $\ell$-th row of $\Qb_p$ multiplied by $c_\ell$, then $\Qb_r$ loses
its positive-definiteness since it is not even symmetric. For a general dependency structure, we have the following result.

\begin{theorem}
Suppose we have an optimization problem of the form
\eqref{eq:dual_sol_easy}. Moreover, assume that $\rb = f(\pb)$ for some
function $f$, that maps $\mathbb{R}^n \rightarrow \mathbb{R}^n$ and
let $\Qb_p = Diag(\tilde{\Qb}_i, i = 1,\ldots,n)$ where $\tilde{\Qb}_i$'s
are generated by \eqref{eq:tilde_Q}. If there exists a
positive-definite matrix $\Qb_r$ and a vector $\cbb_r$ such that
\begin{align}
\label{eq:cond1}
\Qb_r \xb  - 2 \Qb_r \cbb_r = f(-\Qb_p \xb)
\end{align} 
for all $\xb$, then there exists a convex QCQP formulation of
\eqref{eq:dual_sol_easy}.
\end{theorem}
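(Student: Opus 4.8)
The plan is to substitute the interaction models of \eqref{eq:interaction} into \eqref{eq:dual_sol_easy} and verify, term by term, that the resulting program is a convex QCQP. For the objective, plugging $\pb = -\Qb_p\xb$ into $-\xb^T\pb + \frac{\gamma}{2}\xb^T\xb$ gives $\xb^T\Qb_p\xb + \frac{\gamma}{2}\xb^T\xb = \xb^T\big(\Qb_p + \frac{\gamma}{2}\Ib\big)\xb$, exactly as in \eqref{dual_sol_QCQP}; its Hessian $2\Qb_p + \gamma\Ib$ is positive definite because $\Qb_p = Diag(\tilde{\Qb}_i,\, i=1,\ldots,n)$ is positive definite by the construction \eqref{eq:tilde_Q}. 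Hence the objective is a strictly convex quadratic.

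Next I would rewrite the single nonlinear constraint $\xb^T\rb \leq P$. Since $\rb = f(\pb) = f(-\Qb_p\xb)$, hypothesis \eqref{eq:cond1} lets me replace $f(-\Qb_p\xb)$ by $\Qb_r\xb - 2\Qb_r\cbb_r$, so the constraint becomes $\xb^T\Qb_r\xb - 2\xb^T\Qb_r\cbb_r \leq P$. The one genuine computational step is completing the square using the symmetry of $\Qb_r$,
\[
\xb^T\Qb_r\xb - 2\xb^T\Qb_r\cbb_r = (\xb - \cbb_r)^T\Qb_r(\xb - \cbb_r) - \cbb_r^T\Qb_r\cbb_r,
\]
which turns the constraint into $(\xb - \cbb_r)^T\Qb_r(\xb - \cbb_r) \leq P + \cbb_r^T\Qb_r\cbb_r$. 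Since $\Qb_r$ is positive definite this is an ellipsoidal, hence convex, constraint (and its right-hand side is automatically nonnegative whenever the original feasible set is nonempty, so nothing is lost). The remaining constraint $\Kb\xb \leq \bb$ is affine and stays as is.

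Collecting the three pieces yields the convex QCQP
\begin{equation*}
\begin{aligned}
& \underset{\xb}{\text{Minimize}} & & \xb^T\Big(\Qb_p + \tfrac{\gamma}{2}\Ib\Big)\xb\\
& \text{subject to} & & (\xb - \cbb_r)^T\Qb_r(\xb - \cbb_r) \leq P + \cbb_r^T\Qb_r\cbb_r\\
& & & \Kb\xb \leq \bb,
\end{aligned}
\end{equation*}
which is equivalent to \eqref{eq:dual_sol_easy} under the interaction substitution, proving the claim. The only conceptual obstacle is recognizing that condition \eqref{eq:cond1} is \emph{exactly} what is needed: it forces $\xb \mapsto f(-\Qb_p\xb)$ to be affine and, crucially, arranges its linear part to coincide with the gradient term of an ellipsoid centered at $\cbb_r$, so that $\xb^T f(-\Qb_p\xb) \leq P$ collapses to a single convex quadratic inequality. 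Everything else is routine substitution together with the positive-definiteness facts already established.
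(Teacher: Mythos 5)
Your proposal is correct and follows essentially the same route as the paper's own proof: substitute the interaction model, invoke condition \eqref{eq:cond1} to replace $f(-\Qb_p\xb)$ by $\Qb_r\xb - 2\Qb_r\cbb_r$, complete the square using the (symmetric) positive-definite $\Qb_r$, and observe that the resulting ellipsoidal constraint together with the affine constraints and the convex quadratic objective gives a convex QCQP. Your additional check that the objective's Hessian $2\Qb_p + \gamma\Ib$ is positive definite is a small bonus the paper leaves implicit, but otherwise the two arguments coincide.
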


\begin{proof}
Assume there exists a positive-definite matrix $\Qb_r$ and $\cbb_r$ such
that such that \eqref{eq:cond1} holds. Thus, using the dependence
structure and modeling the interaction effect, we can write,
\begin{align*}
\xb^T\rb &= \xb^Tf(\pb) = \xb^Tf(-\Qb_p \xb)\\
&= \xb^T\Qb_r\xb - 2\xb^T\Qb_r \cbb_r\\
&= (\xb - \cbb_r)^T \Qb_r (\xb - \cbb_r) - \cbb_r^T\Qb_r \cbb_r.
\end{align*}
Thus we have,
\begin{align*}
\xb^T\rb \leq P \Longleftrightarrow (\xb - \cbb_r)^T \Qb_r (\xb - \cbb_r) \leq P +
\cbb_r'\Qb_r \cbb_r.
\end{align*}
The result follows from observing the fact that since $\Qb_r$ is
positive definite, $\cbb_r'\Qb_r \cbb_r > 0$.
\end{proof}

\begin{remark}
Few simple functions $f$ for which the convex reformulation Theorem
holds include the linear shift operators and the positive scaling
operators. The exact characterization of the function class may be
hard and is beyond the scope this paper.
\end{remark}

\section{Solution to the QCQP}
\label{sec:sol_qcqp}
In this section we describe the technique we use to solve the
following optimization problem.
\begin{equation}
\label{eq:QCQP}
\begin{aligned}
& \underset{\xb}{\text{Minimize}} & &  (\xb - \ab)^T\Ab (\xb - \ab) \\
& \text{subject to}&  &(\xb - \bb)^T\Bb(\xb - \bb)  \leq \tilde{b}\\
& & & \Cb\xb \leq \cbb,
\end{aligned}
\end{equation}
where $\Ab, \Bb$ are positive definite matrices. Note that this is a QCQP
in its general form having a single quadratic constraint. The problem as
stated in \eqref{dual_sol_QCQP} is a special case of this
formulation. It is already known that solving a general QCQP is
NP-hard \cite{boyd2004convex}. Before we discuss our methodology, we
discuss few of the common techniques used in literature.

There are two main relaxation techniques that are used to solve a
QCQP, namely, semi-definite programming (SDP) and
reformulation-linearization technique (RLT)
\cite{boyd2004convex}. However both of them introduce a new variable
$\Xb = \xb\xb^T$ so that the problem becomes linear in $\Xb$. They
relax the condition $\Xb = \xb\xb^T$ using different means. However,
in doing so they increase the number of variables from $n$ to
$O(n^2)$. This makes these methods prohibitively expensive for most
large scale methods.

However, based on the Operator Splitting method from \cite{basu_user}
we are capable of solving a large enough QP that scales to web
applications.  This motivated us to try to convert the QCQP into an
approximate QP by linearizing the constraints, which we can then solve
efficiently. To the best of our knowledge this technique is new and
has not been previously explored in the literature. The rest of this
section is devoted to explain the linearization of the quadratic
constraint. We also give several results showing convergence
guarantees to the original problem.

\subsection{QCQP to QP Approximation}
Here we describe the linearization technique to convert the quadratic
constraint into a set of $N$ linear constraints. The main idea behind
this approximation, is the fact that given any convex set in the
Euclidean plane, there exists a convex polytope that covers the
set. 

Let us begin with a few notations. Let $\cp$ denote the optimization
problem \eqref{eq:QCQP}. Let $\xb \in \real^{s}$. Define,
\begin{align}
\label{eq:qc}
\cs := \{ \xb \in \mathbb{R}^s : (\xb - \bb)^T\Bb(\xb - \bb)  \leq \tilde{b} \}.
\end{align}
Let $\partial\cs$ denote the boundary of the ellipsoid $\cs$.

To generate the $N$ linear constraints for this one quadratic
constraint, we generate a set of $N$ points, $\xb_1, \ldots, \xb_N \in
\partial\cs$. The sampling technique to select these points are given
in Section \ref{sec:sampling}. Corresponding to these $N$ points we
get the following set of $N$ linear constraints,
\begin{align}
\label{eq:lin_const}
(\xb - \bb)^T \Bb(\xb_j - \bb) \leq \tilde{\bb} \qquad \text{ for } j = 1, \ldots, N.
\end{align}
If we look at it geometrically, it is not hard to see that each of
these linear constraints are just tangent planes to $\cs$ at $\xb_j$
for $j = 1, \ldots, N$. Figure \ref{fig:qcqp_qp_lin} shows a set of
six linear constraints for a ellipsoidal feasible set in two
dimensions. Thus, using these $N$ linear constraints we can write the
approximate optimization problem, $\cp(N)$, as follows.

\begin{equation}
\label{eq:QP_via_lin}
\begin{aligned}
& \underset{x}{\text{Minimize}} & &  (\xb - \ab)^T\Ab (\xb - \ab) \\
& \text{subject to}&  &(\xb - \bb)^T\Bb(\xb_j - \bb)  \leq \tilde{b}\qquad \text{ for } j = 1, \ldots, N \\
& & & \Cb\xb \leq \cbb.
\end{aligned}
\end{equation}
Thus, instead of solving $\cp$, we solve $\cp(N)$ for a large enough
value of $N$. Note that as we sample more points, our approximation
gets more and more accurate to the original solution.

\begin{figure}[!h]
\centering
\includegraphics[scale = 0.6]{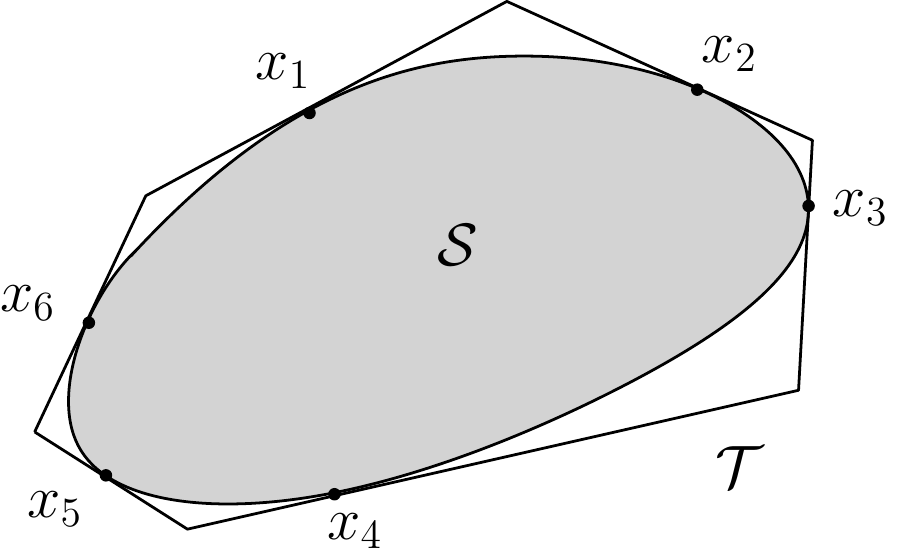}
\caption{\label{fig:qcqp_qp_lin} Converting a quadratic constraint
  into linear constraints. The tangent planes through the 6 points
  $\xb_1, \ldots, \xb_6$ create the approximation to $\cs$.}
\end{figure}

\subsection{Sampling Scheme}
\label{sec:sampling}
Note that the accuracy of the solution of $\cp(N)$ completely depends
on the choice of the $N$ points. The tangent planes to $\cs$ at those
$N$ points create a cover of $\cs$. Throughout this section, we are
going to use the notion of a bounded cover, which we define as
follows.

\begin{definition}
Let $\ct$ be the convex polytope generated by the tangent planes to
$\cs$ at the points $\xb_1, \ldots, \xb_N \in \partial\cs$. $\ct$ is
said to be a bounded cover of $\cs$ if,
$$ \sup_{\tb \in \ct} d(\tb, \cs) < \infty,$$ where $d(\tb, \cs) =
\inf_{\xb \in \cs} \norm{ \tb - \xb }$ and $\norm{\cdot}$ denotes the
Euclidean distance.
\end{definition}

The first result shows that we need a certain minimum number of points
to get a bounded cover. 

\begin{lemma}
\label{lem:min_points}
Let $\cs$ be an $s$ dimensional ellipsoid as defined in
\eqref{eq:qc}. Then we need at least $s+1$ points on $\partial\cs$ to
be get a bounded cover.
\end{lemma}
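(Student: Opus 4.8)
The plan is to argue that fewer than $s+1$ tangent planes can never enclose a bounded region containing the ellipsoid $\cs$, by reducing to the case of halfspaces in $\real^s$ and invoking a Helly/Radon-type dimension count. First I would normalize: by the affine change of variables $\ub = \Bb^{1/2}(\xb - \bb)$, the ellipsoid $\cs$ becomes the Euclidean ball $\bar\cs = \{\ub : \norm{\ub}^2 \le \tilde b\}$, and each tangent plane at $\xb_j \in \partial\cs$ becomes the tangent halfspace to $\bar\cs$ at the corresponding boundary point $\ub_j$, namely $H_j = \{\ub : \inner{\ub}{\ub_j} \le \tilde b\}$ (after rescaling, a halfspace whose bounding hyperplane has outward normal $\ub_j$). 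Boundedness of the cover $\ct$ is an affine-invariant property (an invertible linear map sends bounded sets to bounded sets), so it suffices to show that the intersection $\bigcap_{j=1}^N H_j$ of $N \le s$ such halfspaces is unbounded.

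The key step is the following linear-algebra observation: if $N \le s$, the outward normals $\ub_1, \ldots, \ub_N$ span a subspace of dimension at most $s$, which is proper unless $N = s$ and they are linearly independent; in either case there exists a nonzero vector $\vb \in \real^s$ orthogonal to all of them — if $N < s$ this is immediate, and if $N = s$ one must rule out the independent case, but $s$ independent normal directions $\ub_j$ still leave the recession cone $\{\vb : \inner{\vb}{\ub_j} \le 0 \ \forall j\}$ full-dimensional (it is a simplicial cone with nonempty interior), hence nonzero. Fix such a $\vb$ with $\inner{\vb}{\ub_j} \le 0$ for every $j$ (flip sign if needed so the inequalities go the right way). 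Then for any feasible point $\ub_0 \in \bigcap_j H_j$ — the center $\ub = \zero$ works, since $\inner{\zero}{\ub_j} = 0 \le \tilde b$ — the entire ray $\{\ub_0 + t\vb : t \ge 0\}$ stays in every $H_j$, because $\inner{\ub_0 + t\vb}{\ub_j} = \inner{\ub_0}{\ub_j} + t\inner{\vb}{\ub_j} \le \tilde b$. This ray is unbounded, so $\bigcap_j H_j$ is unbounded, and therefore $\ct \supseteq \bar\cs$ cannot be a bounded cover. Transporting back through $\Bb^{-1/2}$ and the translation, the same conclusion holds for the original ellipsoid, which proves that at least $s+1$ points are needed.

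The main obstacle I anticipate is handling the boundary case $N = s$ cleanly: one must be careful that even $s$ linearly independent tangent directions do not suffice, which is exactly why the bound is $s+1$ and not $s$. The cleanest way to dispatch this is the recession-cone argument above — the intersection of $s$ halfspaces through a common point with linearly independent normals is a translate of a simplicial cone, which always has a nontrivial recession direction — rather than trying to exhibit the unbounded direction by hand. A secondary point worth stating explicitly (though it is routine) is that the affine map $\xb \mapsto \Bb^{1/2}(\xb-\bb)$ carries tangent planes of $\cs$ to tangent planes of the ball and preserves the ``bounded cover'' property in both directions, so that no generality is lost in reducing to the Euclidean ball.
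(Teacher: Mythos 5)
Your proof is correct, and it takes a genuinely different (and in one respect tighter) route than the paper's. The paper argues directly on the tangent-plane system in the original coordinates: it writes the $s$ tangency conditions as a linear system $\Ab\xb=\gammab$, splits into cases according to consistency and rank, and in the invertible case concludes that because all $s$ planes pass through a single point ``the polytope is divergent'' --- an assertion left at the intuitive level. You instead normalize by $\ub=\Bb^{1/2}(\xb-\bb)$ to reduce $\cs$ to a ball, observe that boundedness of the cover is affine-invariant, and then handle all cases $N\le s$ uniformly through the recession cone $\{\vb:\inner{\vb}{\ub_j}\le 0\ \forall j\}$: with at most $s$ outward normals this cone is nontrivial (orthogonal complement if the normals do not span $\real^s$, a full-dimensional simplicial cone if they do), so a ray from the center stays in every tangent halfspace and $\ct$ is unbounded. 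This recession-cone step is exactly the rigorous version of the paper's ``divergent polytope'' claim, so your argument buys a cleaner and more uniform justification of the lower bound. One difference in scope: the paper's proof also sketches the complementary fact that $s+1$ points can yield a bounded cover (shrinking a circumscribed simplex until each facet is tangent), which your write-up does not address; the lemma as stated only asserts the necessity of $s+1$ points, so this omission is not a gap in your proof of the statement, but it is worth knowing the paper uses that paragraph to motivate the subsequent discussion of existence and optimal point sets.
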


\begin{proof}
Note that since $\cs$ is a compact convex body in $\real^s$, there exists a $s$-dimensional simplex $T$ such that $ \cs \subseteq T$. We can always shrink $T$ such that each edge touches $\cs$ tangentially. Since there are $s+1$ faces, we will get $s+1$ points whose tangent surface creates a bounded cover. 

To complete the proof we need to show that we cannot create a bounded cover using only $s$ or fewer points. Consider any set of $s$ points, $\xb_1, \xb_2, \ldots, \xb_s$ on $\partial\cs$. The equation of their tangent planes are,
\begin{align*}
(\xb - \bb)^T \Bb(\xb_j - \bb) = \tilde{b} 
\end{align*}
for $j = 1, \ldots, s$. Note that we can rewrite this as $\Ab\xb = \gammab$, where each row $\ab_j = (\xb_j -  \bb)^T\Bb$ and $\gamma_j = \tilde{b} + \bb^T\Bb(\xb_j - \bb)$ for $j = 1, \ldots, s$. 

Without loss of generality we can assume that $\gammab \in \cc(\Ab)$, otherwise the system of linear equations is inconsistent and this case, it is easy to see that the tangent planes do not create a compact polytope.
Now if $\Ab$ is not of full rank, there exists a continuum of solutions to $\Ab\xb = \gammab$. Hence, the polytope is not bounded. Finally if $\Ab$ is invertible, then there exists a unique solution to the system of equations, call it $\xb_0$. Since all the planes intersect at a single point, the polytope is divergent. Thus, it is not possible to construct a bounded cover with only $s$ points. Similar proof holds for fewer than $s$ points. This completes the proof of the lemma.
\end{proof}

Lemma \ref{lem:min_points} states that we need a minimum of $s + 1$
points to create the bounding cover $\ct$, but it does not guarantee
the its existence. Moreover, we wish to pick our points in a way such
that the constructed $\ct$ is close as possible to $\cs$, thus having
a better approximation. Formally, we introduce the notion of optimal
bounded cover.

\begin{definition}
$\ct^* = \ct(\xb_1^*, \ldots, \xb_n^*)$ is said to be an optimal bounded cover, if  
\begin{align*}
\sup_{\tb \in \ct^*} d(\tb, \cs)  \leq \sup_{\tb \in \ct} d(\tb, \cs)
\end{align*}
for any bounded cover $\ct$ generated by any other $n$-point
sets. Moreover, $\{\xb_1^*, \ldots, \xb_n^*\}$ are defined to be the
optimal $n$-point set.
\end{definition}

Note that we can think of the optimal $n$-point set as that set of $n$
points which minimizes the maximum distance between $\ct^*$ and
$\cs$. It is not hard to see that the optimal $n$-point set on the
unit circle are the $n$th roots of unity, unique upto rotation.

\subsubsection{Reisz Energy and Equidistribution}
\label{sec:reisz}
It has been shown that the $n$th roots of unity minimize the discrete Riesz energy for the unit circle \cite{gotz}. Riesz energy of a point set $A_n = \{\xb_1, \ldots, \xb_n\}$ is defined as
\begin{align}
\label{eq:riesz}
E_s(A_n) := \sum_{i \neq j = 1}^n \norm{\xb_i - \xb_j}^{-s}
\end{align}
for positive real parameter $s$. There is a vast literature on Riesz energy and its association with ``good" configuration of points. In fact, we can associate the optimal $n$-point set to the set of $n$ points that minimize the Riesz energy on $\cs$. It is well known that the measures associated to the optimal point set that minimizes the Riesz energy on $\cs$ converges to the normalized surface measure of $\cs$. For sake of compactness we do not go deeper into this. For more details see \cite{grabner, hardin2005minimal} and the references therein.

Thus, to pick the optimal $n$-point set, we try to choose a point set, which has a very good equidistribution property that is lacking in random uniform sampling. One such point set in $[0,1]^s$ is called the $(t, m, s)$-net. To describe this point set we begin with a few definitions. Throughout these definitions $b\ge2$ is an integer
base, $s\ge1$ is an integer dimension and
$\mathbb{Z}_b=\{0,1,\dots,b-1\}$.

\begin{definition}
For $k_j\in\mathbb{N}_0$ and $c_j\in\mathbb{Z}_{b^{k_j}}$ for $j=1,\dots,s$, the set 
$$
\prod_{j=1}^s\Bigl[ \frac{c_j}{b^{k_j}},\frac{c_j+1}{b^{k_j}}\Bigr) 
$$
is a $b$-adic box of dimension $s$. 
\end{definition}

\begin{definition}
For integers $m\ge t\ge0$,
the points $\xb_1,\dots, $ $\xb_{b^m}  \in[0,1]^s$ 
are a $(t,m,s)$-net in base $b$
if every $b$-adic box of dimension $s$ with volume $b^{t-m}$
contains precisely $b^t$ of the $\xb_i$. 
\end{definition}

The nets have good equidistribution (low discrepancy) because
boxes $[0,a]$ can be efficiently approximated by unions of
$b$-adic boxes. Digital nets can attain a discrepancy of
$O((\log(n))^{s-1}/n)$. There is vast literature on easy construction of these point sets. For more details on digital nets we refer to \cite{dick:pill:2010, nied92}.

\subsubsection{Area preserving map to $\partial\cs$}
Now once we have a point set on $[0,1]^s$ we try to map it to $\partial\cs$ using a measure preserving transformation so that the equidistribution property remains. We describe the mapping in two steps. First we map the point set from $[0,1]^s$ to the hyper-sphere $\mathbb{S}^s = \{ \xb \in \real^{s+1} : \xb^T\xb = 1\}$. Then we map it to $\partial\cs$. The mapping from $[0,1]^s$ to $\mathbb{S}^s$ is based on \cite{brauchart2012quasi}.  

The cylindrical coordinates of the $d$-sphere, can be written as
\begin{align*}
\xb = \xb_s &= ( \sqrt{1 - t_s^s} \xb_{s-1}, t_s)\\ 
& \ldots \\
 \xb_2 &= (\sqrt{1 - t_2^2} \xb_1, t_2)\\
\xb_1 &= (\cos \phi, \sin \phi)
\end{align*}
where $0 \leq \phi \leq 2\pi, -1 \leq t_d \leq 1, \xb_d \in \mathbb{S}^d$ and $d = (1,\ldots, s)$. Thus, an arbitrary point $\xb \in \mathbb{S}^s$ can be represented through angle $\phi$ and heights $t_2, \ldots, t_s$ as,
\begin{align*}
\xb = \xb(\phi, t_2, \ldots, t_s), \qquad 0 \leq \phi \leq 2\pi, -1 \leq t_2, \ldots, t_s \leq 1.
\end{align*}
We map a point $\yb = (y_1, \ldots, y_s) \in [0,1)^s$ to $\xb \in \mathbb{S}^s$ using
\begin{align*}
\varphi_1(y_1) = 2\pi y_1, \qquad \varphi_d(y_d) = 1 - 2y_d \;\;\;(d = 2, \ldots, s)
\end{align*}
and cylindrical coordinates
\begin{align*}
\xb = \Phi_s(\yb) = \xb( \varphi_1(y_1), \varphi_2(y_2), \ldots, \varphi_s(y_s)).
\end{align*}

\begin{lemma}
\label{lem:pres_1}
Under the above notation, $\Phi_s : [0,1)^s \rightarrow \mathbb{S}^s$ is an area preserving map.
\end{lemma}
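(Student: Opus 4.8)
The plan is to induct on the dimension $s$, exploiting the recursive form of the cylindrical coordinates. Here ``area preserving'' should be read as: $\Phi_s$ pushes normalized Lebesgue measure on $[0,1)^s$ forward to the normalized surface measure $\sigma_s$ on $\mathbb{S}^s$, equivalently, expressed in the coordinates $(\phi, t_2, \dots, t_s)$, the Jacobian of $\Phi_s$ with respect to $\sigma_s$ is constant almost everywhere. The key structural observation is that $\Phi_s$ factors: it is the cylindrical embedding $(\omega, t)\mapsto(\sqrt{1-t^2}\,\omega, t)$ of $\mathbb{S}^{s-1}\times[-1,1]$ into $\mathbb{S}^s$, precomposed with $\Phi_{s-1}$ acting on $(y_1,\dots,y_{s-1})$ and the one-dimensional map $\varphi_s$ acting on $y_s$. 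So it is enough to track three Jacobian contributions and show their product is constant.

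The base case $s=1$ is immediate: $\Phi_1(y_1)=(\cos 2\pi y_1, \sin 2\pi y_1)$ traverses $\mathbb{S}^1$ at constant speed $2\pi$, so the arclength element is $2\pi\,\rmd y_1$ and $\Phi_1$ is area preserving.

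For the inductive step I would first establish the higher-dimensional Archimedes slicing identity
\[
\rmd\sigma_s = c_s\,(1-t^2)^{(s-2)/2}\,\rmd t\,\rmd\sigma_{s-1},
\]
where $t=t_s$ is the last coordinate and $c_s$ is a normalization constant. This is the heart of the argument and the step I expect to be the most delicate. One obtains it by computing the first fundamental form of the parametrization $(\omega,t)\mapsto(\sqrt{1-t^2}\,\omega,t)$: the vector $\partial_t$ has squared length $1/(1-t^2)$, tangent directions along $\mathbb{S}^{s-1}$ pick up a factor $(1-t^2)$, and the cross terms vanish because those directions are orthogonal to $\omega$; hence the metric is block diagonal and $\sqrt{\det} = (1-t^2)^{-1/2}\cdot(1-t^2)^{(s-1)/2} = (1-t^2)^{(s-2)/2}$. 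Combined with the inductive hypothesis that $\Phi_{s-1}$ is area preserving, and the fact that $\varphi_s(y_s)=1-2y_s$ has constant derivative, the pushforward of Lebesgue measure under $\Phi_s$ becomes proportional to $(1-t_s^2)^{-(s-2)/2}\,\rmd\sigma_s$.

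The last step is to see that this is a \emph{constant} multiple of $\sigma_s$. This is exactly where the classical case $s=2$ works without effort: the weight $(1-t^2)^{(s-2)/2}\equiv 1$ and $\Phi_2$ is precisely the Lambert cylindrical equal-area projection \cite{brauchart2012quasi}, so the lemma holds for $\mathbb{S}^2$. For $s\ge 3$ the same telescoping computation is what forces the right calibration of the height maps: one must replace each linear $\varphi_d$ by the map whose derivative cancels the weight $(1-t^2)^{(d-2)/2}$ (a shifted, rescaled incomplete-beta CDF), and then iterating the slicing identity for $d=2,\dots,s$ makes the accumulated Jacobian factors telescope against the product form of $\sigma_s$, and the lemma follows by induction. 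Thus the main obstacle is twofold: getting the exponent in the slicing identity right via the metric computation, and verifying that the one-dimensional height transformations are exactly the ones that flatten it.
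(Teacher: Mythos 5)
Your computations are sound and, unlike the paper, you actually supply an argument: the paper's entire proof of this lemma is the citation ``See \cite{brauchart2012quasi}.'' Your base case $s=1$, the block-diagonality of the first fundamental form of $(\omega,t)\mapsto(\sqrt{1-t^2}\,\omega,t)$ (the cross terms vanish since tangent directions to $\mathbb{S}^{s-1}$ are orthogonal to $\omega$), and the resulting slicing identity $\rmd\sigma_s \propto (1-t^2)^{(s-2)/2}\,\rmd t\,\rmd\sigma_{s-1}$ are all correct, and this is exactly the right framework.

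The genuine problem is the last step, and you have in effect noticed it yourself: the lemma, as stated, defines $\Phi_s$ with the \emph{linear} height maps $\varphi_d(y_d)=1-2y_d$ for every $d=2,\dots,s$, and your own bookkeeping shows that the accumulated Jacobian then carries the non-constant factor $\prod_{d\ge 3}(1-t_d^2)^{-(d-2)/2}$. So the map as written is area preserving only for $s\le 2$ (the $s=2$ case being the Lambert cylindrical equal-area projection, which is what \cite{brauchart2012quasi} actually treats); for $s\ge 3$, which is the regime relevant to the paper's application where $s$ is the number of decision variables, the statement with linear $\varphi_d$ is simply false. Your ``fix'' --- replacing each $\varphi_d$ by the inverse of the normalized CDF of $(1-t^2)^{(d-2)/2}$ on $[-1,1]$, i.e.\ an incomplete-beta inverse --- is the standard recursive construction (cf.\ \cite{brauchart2015distributing}), but it proves a corrected lemma about a different map, not the lemma as stated. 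So as a verification of the claim in the paper your proposal does not close the gap; rather it exposes that the claim needs either the restriction $s\le 2$ or the modified height maps, and the paper's one-line citation glosses over exactly this point.
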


\begin{proof}
See \cite{brauchart2012quasi}.
\end{proof}

\begin{remark}
Instead of using $(t,m,s)$-nets and mapping to $\mathbb{S}^s$, we could have also used spherical $t$-designs, the existence of which was proved in \cite{bondarenko2013optimal}. However, construction of such sets is still a hard problem in large dimensions. We refer to \cite{brauchart2015distributing} for more details. 
\end{remark}

We consider the map from $\psi : \mathbb{S}^{s-1} \rightarrow\partial\cs$ defined as follows.
\begin{align}
\label{eq:psi}
\psi(\xb) =\sqrt{\tilde{b}}B^{-1/2}\xb + b.
\end{align}
The next result shows that this also an area-preserving map, in the sense of normalized surface measures.

\begin{lemma}
\label{lem:pres_2}
Let $\psi$ be a mapping from $\mathbb{S}^{s-1} \rightarrow\partial\cs$ as defined in \eqref{eq:psi}. Then for any set $A \subseteq \partial\cs$,
\begin{align*}
\sigma_s(A) = \lambda_s(\psi^{-1}(A))
\end{align*}
where, $\sigma_s, \lambda_s$ are the normalized surface measure of $\partial\cs$ and $\mathbb{S}^{s-1}$ respectively
\end{lemma}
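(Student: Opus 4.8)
The plan is to reduce the claimed identity $\sigma_s(A)=\lambda_s(\psi^{-1}(A))$ to a single fact: that $\psi$ is a homeomorphism of $\mathbb{S}^{s-1}$ onto $\partial\cs$. Once that is in hand, the identity is precisely the statement that $\sigma_s$ is the image measure $\psi_{\#}\lambda_s$, which is the standard way one equips an affine image of the round sphere with a normalized ``surface'' measure in this circle of ideas (cf.\ \cite{brauchart2012quasi}); everything else is one line.

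First I would verify that $\psi$ lands in $\partial\cs$ and is bijective. Set $\Tb=\sqrt{\tilde b}\,\Bb^{-1/2}$; since $\Bb$ is positive definite, $\Tb$ is symmetric positive definite, hence invertible, and $\psi(\xb)=\Tb\xb+b$ is an affine bijection of $\real^s$. For $\xb\in\mathbb{S}^{s-1}$, i.e.\ $\xb^T\xb=1$, we get $(\psi(\xb)-b)^T\Bb(\psi(\xb)-b)=\tilde b\,\xb^T\Bb^{-1/2}\Bb\Bb^{-1/2}\xb=\tilde b\,\xb^T\xb=\tilde b$, so $\psi(\xb)\in\partial\cs$; running the same computation in reverse shows $\psi^{-1}(\yb)=\tilde b^{-1/2}\Bb^{1/2}(\yb-b)$ carries $\partial\cs$ onto $\mathbb{S}^{s-1}$. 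Thus $\psi$ and $\psi^{-1}$ are continuous bijections, so $\psi^{-1}(A)$ is a Borel subset of $\mathbb{S}^{s-1}$ for every Borel $A\subseteq\partial\cs$ and $\lambda_s(\psi^{-1}(A))$ is well defined; invoking the convention $\sigma_s:=\psi_{\#}\lambda_s$ then closes the argument. Chaining with $\Phi_{s-1}$ from Lemma~\ref{lem:pres_1}, the composite map $[0,1)^{s-1}\to\partial\cs$ transports the uniform measure on the cube to $\sigma_s$, which is exactly what is needed for a $(t,m,s)$-net to push forward to a well-equidistributed point set on $\partial\cs$.

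The step that needs genuine care — and the only place a naive argument slips — is the interpretation of ``normalized surface measure'' on $\partial\cs$. If one reads it as the intrinsic $(s-1)$-dimensional Hausdorff measure, the area formula applied to the affine map $\psi$ produces a non-constant tangential Jacobian, $J_\psi(\xb)=|\det\Tb|\,\tilde b^{-1/2}\sqrt{\xb^T\Bb\xb}$, so the intrinsic surface measure is a $\Bb$-weighted reparametrization of $\lambda_s$ and coincides with it only when $\Bb$ is a scalar multiple of the identity. The version of the lemma that holds for a general ellipsoid, and the one actually used downstream, is therefore the image-measure version above; I would state that convention explicitly at the outset, after which the proof is immediate from the bijectivity computation. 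So the ``hard part'' is conceptual (fixing the right definition of $\sigma_s$), not computational — every displayed identity is routine linear algebra.
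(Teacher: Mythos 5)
Your proposal is correct and, in computational substance, follows the same route as the paper: write down the explicit affine correspondence $\psi(\xb)=\sqrt{\tilde b}\,\Bb^{-1/2}\xb+b$ between $\mathbb{S}^{s-1}$ and $\partial\cs$ (the paper does this via the explicit formula for $\psi^{-1}(A)$ and the remark that the translation is harmless), and then close the argument by appealing to how the normalized surface measure on $\partial\cs$ is related to $\lambda_s$. Where you go beyond the paper is in making the last step honest: the paper's final equality is justified only by the phrase ``follows from the definition of normalized surface measures,'' whereas you point out, correctly, that this is a genuine convention rather than a computation --- if $\sigma_s$ is read as the intrinsic (normalized Hausdorff) surface measure of the ellipsoid, the identity fails for non-spherical $\Bb$, since the tangential Jacobian $|\det \Tb|\,\tilde b^{-1/2}\sqrt{\xb^T\Bb\xb}$ of the affine map is non-constant on the sphere and is constant precisely when $\Bb$ is a scalar multiple of the identity. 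So the lemma holds exactly when $\sigma_s$ is taken to be the image of $\lambda_s$ under $\psi$, which is the reading needed downstream (so that the composite with $\Phi_{s-1}$ transports the net's equidistribution to $\partial\cs$). Your version buys a precise statement of what ``normalized surface measure'' must mean for the result to be true; the paper's version is shorter but leaves that definitional choice implicit (and its intermediate set $\{\tilde b^{-1/2}\Bb^{1/2}\xb:\xb\in A\}$ is not literally a subset of $\mathbb{S}^{s-1}$, another small imprecision your formulation avoids). No gap in your argument; just state the convention $\sigma_s:=$ pushforward of $\lambda_s$ explicitly, as you propose.
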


\begin{proof}
Pick any $A \subseteq \partial\cs$. Then we can write,
\begin{align*}
\psi^{-1}(A) = \left\{ \frac{1}{\sqrt{\tilde{b}}}B^{1/2}(\xb - b) : \xb \in A \right\}.
\end{align*}
Now since the linear shift does not change the surface area, we  have,
\begin{align*}
\lambda_s(\psi^{-1}(A)) &= \lambda_s\left(\left\{ \frac{1}{\sqrt{\tilde{b}}}B^{1/2}(\xb - b) : \xb \in A \right\}\right)  \\
&= \lambda_s\left(\left\{ \frac{1}{\sqrt{\tilde{b}}}B^{1/2}\xb : \xb \in A \right\}\right) = \sigma_s(A),
\end{align*}
where the last equality follows from the definition of normalized surface measures. This completes the proof.
\end{proof}

Following Lemmas \ref{lem:pres_1} and \ref{lem:pres_2} we see that the map $$\psi \circ \Phi_{s -1} : [0,1)^s \rightarrow \partial\cs,$$ is a measure preserving map. Using this map  and the $(t,m,s-1)$ net in base $b$, we derive the optimal $b^m$-point set on $\partial\cs$. The
procedure is detailed as Algorithm \ref{algo:simulate}. 

\begin{algorithm}
\caption{Point Simulation on $\partial\cs$}\label{algo:simulate}
\begin{algorithmic}[1]
\State Input : $B, b, \tilde{b}$ to specify $\cs$ and $N = k^m$ points 
\State Output : $\xb_1, \ldots, \xb_N \in \partial\cs$
\State Generate $\yb_1, \ldots, \yb_N$ as a $(t,m,s-1)$-net in base $k$.
\For {$i \in 1, \ldots, N$}
\State $\xb_i = \psi \circ \Phi_{s-1}(\yb_i)$
\EndFor
\State \Return {$\xb_1, \ldots, \xb_N$}
\end{algorithmic}
\end{algorithm}

Figure \ref{fig:net2sph2ell} shows how we transform a $(0,7,2)$-net in
base 2 to a sphere and then to an ellipsoid. For more general
geometric constructions we refer to \cite{basu:owen:2015, basu2015scrambled}.

\begin{figure*}
\centering
\includegraphics[width=\linewidth]{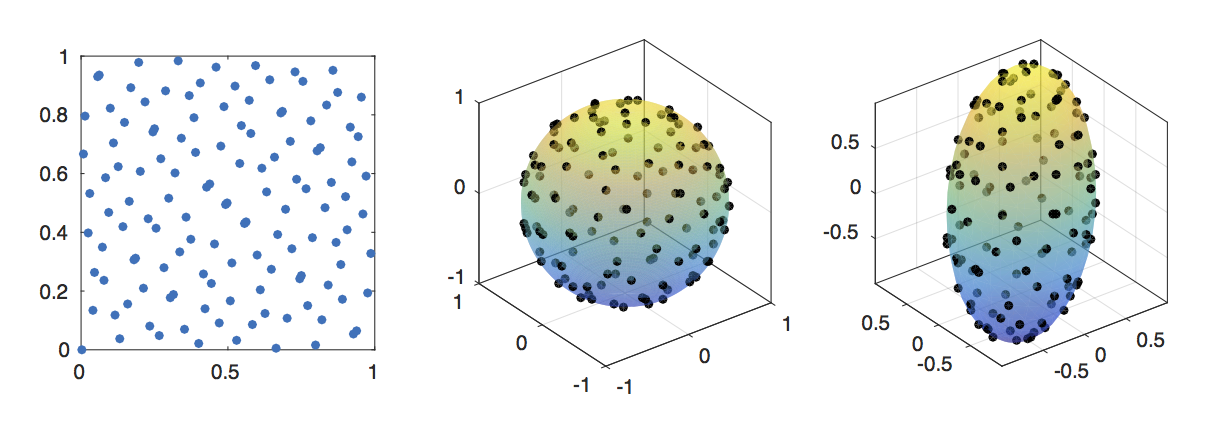}
\caption{\label{fig:net2sph2ell} The left panel shows a $(0, 7,
  2)$-net in base $2$ which is mapped to a sphere in 3 dimensions
  (middle panel) and then mapped to the ellipsoid as seen in the right
  panel.}
\end{figure*}

\subsection{Convergence guarantees}
Consider the problem $\cp(N)$ as stated in \eqref{eq:QP_via_lin}. We
shall show that asymptotically as $N \rightarrow \infty$, we get back
the original problem $\cp$ as stated in \eqref{eq:QCQP}. We shall also
prove some finite sample results to give some error bounds on the
solution to $\cp(N)$. We begin with a few notation. Let $\xb^*,
\xb^*(N)$ denote the solution to $\cp$ and $\cp(N)$ respectively. Let
$f(\cdot)$ denote the objective function.

\begin{theorem}
\label{thm:converge}
Let $\cp$ and $\cp(N)$ be the optimization problems defined in
\eqref{eq:QCQP} and \eqref{eq:QP_via_lin} respectively. Then, $\cp(N)
\rightarrow \cp$ as $N \rightarrow \infty$ in the sense that
$$\lim_{N \rightarrow \infty} \norm{\xb^*(N) - \xb^*} = 0.$$ 
\end{theorem}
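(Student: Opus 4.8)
The plan is to argue by a compactness/subsequence argument resting on three facts: that $\cp(N)$ is a relaxation of $\cp$, that the circumscribing tangent polytopes produced by Algorithm \ref{algo:simulate} collapse onto $\cs$, and that the minimizer of $\cp$ is unique.

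First I would record the structural facts. Since $\Ab$ is positive definite, $f(\xb)=(\xb-\ab)^T\Ab(\xb-\ab)$ is strictly convex and coercive, so (assuming $\cp$ is feasible, as throughout) it attains a unique minimizer $\xb^*$ on the closed feasible set. Each linear inequality in \eqref{eq:lin_const} is the supporting halfspace of the convex body $\cs$ at the boundary point $\xb_j$, so $\cs\subseteq\ct_N$ and the feasible set of $\cp(N)$ contains that of $\cp$; hence $\xb^*(N)$ exists (coercivity of $f$, no need for $\ct_N$ to be bounded) and $f(\xb^*(N))\le f(\xb^*)$ for every $N$. Coercivity then forces the whole family $\{\xb^*(N)\}$ into the bounded sublevel set $\{f\le f(\xb^*)\}$. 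Suppose the claim fails: there is $\varepsilon>0$ and a subsequence with $\norm{\xb^*(N_k)-\xb^*}\ge\varepsilon$; by boundedness pass to a further subsequence, not relabeled, with $\xb^*(N_k)\to\bar\xb$, so $\bar\xb\ne\xb^*$.

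Next I would show $\bar\xb$ is feasible for $\cp$. The linear constraints $\Cb\bar\xb\le\cbb$ pass to the limit at once. For the quadratic constraint, the key input is that the sampling scheme drives the covering radius $h_N:=\sup_{\yb\in\partial\cs}\min_{j\le N}\norm{\yb-\xb_j}$ to $0$: the $(t,m,s-1)$-nets have star-discrepancy $O((\log N)^{s-2}/N)\to0$, and by Lemmas \ref{lem:pres_1} and \ref{lem:pres_2} the resulting points are equidistributed for the normalized surface measure on $\partial\cs$; since that measure gives every spherical cap positive mass, the Portmanteau inequality shows each fixed cap eventually contains a sample point, which yields $h_N\to0$. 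Now suppose, for contradiction, that $(\bar\xb-\bb)^T\Bb(\bar\xb-\bb)>\tilde b$. Then $\bar\xb\notin\cs$, and a separating-hyperplane argument (concretely, taking $\zb=\Pi_{\cs}(\bar\xb)\in\partial\cs$, the normal-cone relation $\bar\xb-\zb=t\,\Bb(\zb-\bb)$ with $t>0$ gives $(\bar\xb-\bb)^T\Bb(\zb-\bb)=\tilde b+t\,\norm{\Bb(\zb-\bb)}^2=:\tilde b+2\delta$ with $\delta>0$) produces a tangent plane of the form in \eqref{eq:lin_const} strictly separating $\bar\xb$ from $\cs$. For each $k$ pick a sample point $\xb_{j(k)}$ with $\norm{\xb_{j(k)}-\zb}\le h_{N_k}\to0$; since $\xb_{j(k)}$ is one of the $N_k$ points defining $\ct_{N_k}$ and $\xb^*(N_k)\in\ct_{N_k}$, we have $(\xb^*(N_k)-\bb)^T\Bb(\xb_{j(k)}-\bb)\le\tilde b$, while the left side converges to $(\bar\xb-\bb)^T\Bb(\zb-\bb)\ge\tilde b+2\delta$ — a contradiction. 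Hence $\bar\xb\in\cs$, so $\bar\xb$ is feasible for $\cp$.

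Finally I would close the loop: $f(\bar\xb)=\lim_k f(\xb^*(N_k))\le f(\xb^*)$ because $\cp(N_k)$ is a relaxation, whereas feasibility of $\bar\xb$ for $\cp$ gives $f(\bar\xb)\ge f(\xb^*)$; thus $f(\bar\xb)=f(\xb^*)$, and strict convexity of $f$ over the convex feasible set forces $\bar\xb=\xb^*$, contradicting $\norm{\bar\xb-\xb^*}\ge\varepsilon$. Hence every subsequence of $\{\xb^*(N)\}$ has a further subsequence converging to $\xb^*$, i.e. $\norm{\xb^*(N)-\xb^*}\to0$. The step I expect to be the real obstacle is the second one: rigorously converting the equidistribution guarantee for the nets into $h_N\to0$ on $\partial\cs$ (equivalently, Hausdorff convergence $\ct_N\to\cs$); the remainder is standard convex analysis. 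I would also flag the mild standing hypotheses that make the statement meaningful: $\cp$ feasible and $\Ab,\Bb\succ0$ with $\tilde b>0$, so that $\cs$ is a genuine bounded ellipsoid with smooth boundary and $\Pi_{\cs}$ is well defined.
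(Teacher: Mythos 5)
Your proof is correct, and it is rigorous at exactly the points where the paper's own proof is terse. The paper argues at the level of feasible sets: it shows $\cs\subseteq\ct_N$, then for the converse inclusion argues that any exterior limit point is eventually cut off by a tangent plane through a nearby boundary point, concludes $\ct_N\rightarrow\cs$, and simply asserts that the two problems therefore ``match.'' You share the same geometric core --- the supporting halfspaces in \eqref{eq:lin_const} make $\cp(N)$ a relaxation of $\cp$, and density of the sample points on $\partial\cs$ lets the tangent plane at the projection $\Pi_{\cs}(\bar{\xb})$ separate any exterior cluster point --- but you route the conclusion through the minimizers themselves: the relaxation inequality $f(\xb^*(N))\le f(\xb^*)$, coercivity of $f$ for compactness, your separation step for feasibility of cluster points, and strict convexity (uniqueness of $\xb^*$) to close the loop. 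This supplies two steps the paper leaves implicit: (i) the passage from convergence of the feasible sets to convergence of the optimizers, which the paper does not argue at all, and (ii) the claim that ``there exist infinitely many points in $A_\epsilon$,'' which you justify from the measure-preservation in Lemmas \ref{lem:pres_1} and \ref{lem:pres_2} together with the low discrepancy of the $(t,m,s-1)$-nets, so that the covering radius of the sample on $\partial\cs$ tends to zero; the paper takes this density for granted ``in the limiting case.'' The standing hypotheses you flag ($\Ab,\Bb$ positive definite, $\tilde b>0$, $\cp$ feasible) are indeed what the paper implicitly assumes, and the only detail left to write out fully is the routine compactness step upgrading ``every fixed cap eventually contains a sample point'' to uniform smallness of the covering radius, which you already identify as the crux.
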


\begin{proof}
Fix any $N$. Let $\ct_{N}$ denote the optimal bounded cover constructed with $N$ points on $\partial\cs$. Note that to prove the result, it is enough to show that $\ct_{N} \rightarrow \cs$ as $N \rightarrow \infty$. This guarantees that linear constraints of $\cp(N)$ converge to the quadratic constraint of $\cp$, and hence the two problems match. Now since $\cs \subseteq \ct_N$ for all $N$, it is easy to see that $\cs \subseteq \lim_{N \rightarrow \infty} \ct_N.$ 

To prove the converse, let $t_0 \in \lim_{N \rightarrow \infty} \ct_N$ but $t_0 \not \in \cs$. Thus, $d(t_0, \cs) > 0$. Let $t_1$ denote the projection of $t_0$ onto $\cs$. Thus, $t_0 \neq t_1 \in \partial\cs$. Choose $\epsilon$ to be arbitrarily small and consider any region $A_\epsilon$ on $\partial\cs$ with diameter less than $\epsilon$. Since, we are working with a limiting case, there exists infinitely many points in $A_\epsilon$. Thus there exists a point $t^* \in A_\epsilon$, the tangent plane through which cuts the line joining $t_0$ and $t_1$. Thus, $t_0 \not\in \lim_{N \rightarrow \infty} \ct_N$. Hence, we get a contradiction and the result is proved.
\end{proof}

Note that Theorem \ref{thm:converge} shows that $\lim_{N \rightarrow
  \infty} \norm{\xb^*(N) - \xb^*} = 0$ and hence, $\lim_{N \rightarrow
  \infty} \left| f(\xb^*(N)) - f(\xb^*)\right| = 0.$ Now we prove
state some finite sample results.

\begin{theorem}
\label{thm:upp_bound}
Let $g : \mathbb{N} \rightarrow \real$ such that $\lim_{n \rightarrow
  \infty} g(n) = 0.$ Further assume that $\norm{\xb^*(N) - \xb^*} \leq
C_1 g(N)$ for some constant $C_1 > 0$. Then,
\begin{align*}
\left| f(\xb^*(N)) - f(\xb^*) \right| \leq C_2 g(N)
\end{align*}
where $C_2 > 0$ is a constant. 
\end{theorem}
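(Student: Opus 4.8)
The plan is to exploit the fact that the objective $f(\xb) = (\xb - \ab)^T\Ab(\xb - \ab)$ is a fixed quadratic form in $\xb$, so that its increment between two points can be written out exactly and bounded by the displacement of those points times a factor that stays bounded as $N \to \infty$. Concretely, I would set $\ub := \xb^* - \ab$ and $\vb_N := \xb^*(N) - \xb^*$, so that $\xb^*(N) - \ab = \ub + \vb_N$, and expand
\begin{align*}
f(\xb^*(N)) - f(\xb^*) = (\ub + \vb_N)^T\Ab(\ub + \vb_N) - \ub^T\Ab\ub = 2\,\ub^T\Ab\,\vb_N + \vb_N^T\Ab\,\vb_N .
\end{align*}
By Cauchy--Schwarz and the operator norm bound this gives $\abr{f(\xb^*(N)) - f(\xb^*)} \le 2\nbr{\Ab}\,\nbr{\ub}\,\nbr{\vb_N} + \nbr{\Ab}\,\nbr{\vb_N}^2$.

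Next I would insert the hypothesis $\nbr{\vb_N} = \nbr{\xb^*(N) - \xb^*} \le C_1 g(N)$. The linear term is then already a constant multiple of $g(N)$. For the quadratic term, observe that since $g(n) \to 0$ the values $g(n)$ form a bounded sequence, so $G := \sup_{n\ge 1} g(n) < \infty$ (and $G \ge 0$), whence $\nbr{\vb_N}^2 \le (C_1 g(N))(C_1 G) = C_1^2 G\, g(N)$. Combining the two estimates,
\begin{align*}
\abr{f(\xb^*(N)) - f(\xb^*)} \le \nbr{\Ab}\,C_1\,\rbr{2\nbr{\xb^* - \ab} + C_1 G}\, g(N),
\end{align*}
so the claim holds with $C_2 := \nbr{\Ab}\,C_1\rbr{2\nbr{\xb^* - \ab} + C_1 G}$, a finite positive constant not depending on $N$. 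One could obtain the same bound via the mean value theorem, using $\grad f(\xb) = 2\Ab(\xb - \ab)$ evaluated at a point on the segment from $\xb^*$ to $\xb^*(N)$, whose distance to $\ab$ is at most $\nbr{\xb^* - \ab} + C_1 G$; I would present the algebraic expansion as the primary argument since it requires no appeal to differentiability.

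There is no serious obstacle here: the whole content is that $\Ab$ is a fixed positive definite matrix, so $f$ is a fixed quadratic polynomial and hence Lipschitz on any bounded set containing the (eventually close) iterates. The only point requiring a little care is keeping $C_2$ independent of $N$, which forces one to handle the quadratic term $\nbr{\vb_N}^2$ by bounding one of its two factors of $g(N)$ by the supremum $G$, rather than attempting to use $g(N)^2 \le g(N)$, which need not hold when $g$ takes values above $1$.
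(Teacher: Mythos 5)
Your proof is correct and takes essentially the same route as the paper's: both reduce the claim to the exact first-order expansion $f(\xb^*(N)) - f(\xb^*) = \langle \nabla f(\xb^*), \xb^*(N)-\xb^*\rangle + (\xb^*(N)-\xb^*)^T\Ab(\xb^*(N)-\xb^*)$ (the paper reaches it through an integral remainder and bounds the remainder by $\sigma_{\max}(\Ab)\norm{\xb^*(N)-\xb^*}^2$, you simply expand the quadratic), and then apply Cauchy--Schwarz with the hypothesis $\norm{\xb^*(N)-\xb^*}\leq C_1 g(N)$. The one point where you are more explicit than the paper is the quadratic term: the paper's estimates leave a $g(N)^2$ contribution and pass directly to the bound $C_2\,g(N)$, whereas you justify this step by bounding one factor by $G=\sup_n g(n)<\infty$ (finite since $g(n)\to 0$, and nonnegative since the hypothesis forces $g(N)\geq 0$), which is precisely the observation needed to make that final step airtight.
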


\begin{proof}
Note that $f(\xb) = (\xb - \bsa)^TA(\xb - \bsa)$ and
$\nabla f(\xb) = 2A(\xb - \bsa)$. Now, note that we can write
\begin{align*}
f(\xb) &= f(\xb^*) + \int_{0}^1 \langle \nabla f(\xb^* + t(\xb - \xb^*)), \xb -\xb^* \rangle dt \\
&= f(\xb^*) +  \langle \nabla f(\xb^*), \xb - \xb^* \rangle \\
&\;\;\; + \int_{0}^1 \langle \nabla f(\xb^* + t(\xb - \xb^*)) - \nabla f(\xb^*), \xb -\xb^* \rangle dt\\
&= I_1 + I_2 + I_3 \text{ (say) }.
\end{align*}
Now, we can bound the last term as follows. Observe that using Cauchy-Schwarz inequality,
\begin{align*}
\left|I_3\right| &\leq \int_{0}^1 \left|\langle \nabla f(\xb^* + t(\xb - \xb^*)) - \nabla f(\xb^*), \xb -\xb^* \rangle\right| dt\\
&\leq \int_{0}^1 \norm{ \nabla f(\xb^* + t(\xb - \xb^*)) - \nabla f(\xb^*)}\norm{ \xb -\xb^* } dt\\
&\leq 2  \sigma_{\max}(A) \int_{0}^1 \norm{t(\xb -\xb^*)}\norm{\xb -\xb^*} dt \\
&=  \sigma_{\max}(A)\norm{\xb -\xb^*}^2,
\end{align*}
where $\sigma_{\max}(A)$ denotes the highest singular value of $A$.
Thus, we have
\begin{align}
\label{eq:step1}
f(\xb) &= f(\xb^*) + \langle \nabla f(\xb^*), \xb - \xb^* \rangle + \tilde{C} \norm{\xb -\xb^*}^2
\end{align}
where $|\tilde{C}| \leq \sigma_{\max}(A)$. Furthermore,
\begin{align}
\label{eq:step2}
|\langle \nabla f(\xb^*), &\xb^*(N) - \xb^* \rangle| \nonumber\\
&= \left|\langle 2A(\xb^* - a), \xb^*(N) - \xb^* \rangle\right| \nonumber\\
&\leq 2 \sigma_{\max}(A) (\norm{\xb^*} + \norm{\bsa}) \norm{\xb^*(N) - \xb^*} \nonumber \\
&\leq 2 C_1 \sigma_{\max}(A) (s +  \norm{\bsa}) g(N)
\end{align}
Combining \eqref{eq:step1} and \eqref{eq:step2} we have, 
\begin{align*}
\left| f(\xb^*(N)) - f(\xb^*) \right| \leq C_2 g(N)
\end{align*}
for some constant $C_2 > 0$. Thus, the result follows. 
\end{proof}

Note that the function $g$ gives us an idea about how fast $\xb^*(N)$
converges $\xb^*$. To help, identify the function $g$ we state the
following results.

\begin{lemma}
\label{lem:step1}
If $f(\xb^*) = f(\xb^*(N))$, then $\xb^* = \xb^*(N)$. Furthermore, if
$f(\xb^*) \geq f(\xb^*(N))$, then $\xb^* \in \partial\cu$ and
$\xb^*(N) \not\in \cu$, where $\cu = \cs \cap \{\xb : C\xb \leq c \}$
is the feasible set for \eqref{eq:QCQP}
\end{lemma}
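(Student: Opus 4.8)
The plan is to exploit one structural fact: the feasible region $\cu_N := \ct_N \cap \{\xb : \Cb\xb \le \cbb\}$ of $\cp(N)$ \emph{contains} the feasible region $\cu$ of $\cp$. This holds because each of the $N$ linear constraints in \eqref{eq:QP_via_lin} is a supporting half-space of the ellipsoid $\cs$ at one of its boundary points, so $\cs \subseteq \ct_N$ and hence $\cu \subseteq \cu_N$. I would then record two consequences: (i) since $\cu \subseteq \cu_N$ and $\xb^*(N)$ minimizes $f$ over $\cu_N$, we always have $f(\xb^*(N)) \le f(\xb^*)$; and (ii) since $f(\xb) = (\xb - \ab)^T\Ab(\xb - \ab)$ with $\Ab$ positive definite is strictly convex and coercive, it has a \emph{unique} minimizer over every nonempty closed convex set — in particular over $\cu$ (namely $\xb^*$) and over $\cu_N$ (namely $\xb^*(N)$).

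For the first statement, I would argue: if $f(\xb^*) = f(\xb^*(N))$, then $\xb^* \in \cu \subseteq \cu_N$ attains the minimum value of $f$ on $\cu_N$, so $\xb^*$ is \emph{a} minimizer of $f$ over $\cu_N$; uniqueness of that minimizer (from strict convexity) forces $\xb^* = \xb^*(N)$.

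For the second statement, the first statement lets me assume the strict inequality $f(\xb^*) > f(\xb^*(N))$. Then: if $\xb^*(N)$ were in $\cu$, it would be feasible for $\cp$, and optimality of $\xb^*$ over $\cu$ would give $f(\xb^*) \le f(\xb^*(N))$, a contradiction; hence $\xb^*(N) \notin \cu$. And if $\xb^*$ were in $\intr(\cu)$, it would be an unconstrained local minimizer of the smooth convex $f$, so $\grad f(\xb^*) = 2\Ab(\xb^* - \ab) = 0$, i.e. $\xb^* = \ab$ and $f(\xb^*) = 0$; but $f \ge 0$ everywhere, so $f(\xb^*(N)) \ge 0 = f(\xb^*)$, contradicting the strict inequality. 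Thus $\xb^* \notin \intr(\cu)$, and since $\xb^* \in \cu$ with $\cu$ closed, $\xb^* \in \partial\cu$.

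I do not expect a genuine obstacle here; the one point to be careful about is the reading of the hypothesis. Because $\cu \subseteq \cu_N$, the inequality $f(\xb^*) \ge f(\xb^*(N))$ holds unconditionally, so the ``Furthermore'' clause is non-vacuous only in the strict case $f(\xb^*) > f(\xb^*(N))$ (the equality case being exactly the first clause). A secondary point is that the argument putting $\xb^*$ on $\partial\cu$ must not assume $\cu$ has nonempty interior — it only uses that $\xb^*$ fails to be interior. Everything else is just strict convexity plus coercivity of $f$ (uniqueness of constrained minimizers) together with the facts that $f \ge 0$ and that its unconstrained minimum is attained at $\ab$.
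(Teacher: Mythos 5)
Your proof is correct, and for the second clause it takes a slightly different route from the paper. The first clause is essentially the paper's argument: both rest on $\cu \subseteq \ct_N \cap \{\xb : \Cb\xb \le \cbb\}$ plus strict convexity of $f$ (the paper phrases it as $f$ being constant on the segment joining $\xb^*$ and $\xb^*(N)$, contradicting strong convexity; you invoke uniqueness of the minimizer over the larger feasible set directly — same substance). For the boundary statement, the paper argues geometrically: it picks the point $\tilde{\xb} \in \partial\cu$ on the segment from $\xb^*$ to $\xb^*(N)$ (which exists because $\xb^*(N) \notin \cu$) and derives $f(\tilde{\xb}) < f(\xb^*)$ by convexity, contradicting optimality of $\xb^*$ over $\cu$. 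You instead use stationarity: if $\xb^* \in \intr(\cu)$ then $\grad f(\xb^*) = 0$, so $\xb^* = \ab$ and $f(\xb^*) = 0$, which is incompatible with $f(\xb^*) > f(\xb^*(N)) \ge 0$. Both arguments are valid; yours avoids constructing the boundary point on the segment and works verbatim when $\cu$ has empty interior, while the paper's segment argument is more geometric and is what the subsequent Lemma \ref{lem:step2} (the conic-cap statement) leans on, so it carries a bit more structural information forward. Your two side remarks are also well taken: since $\cu$ is contained in the relaxed feasible set, $f(\xb^*) \ge f(\xb^*(N))$ holds unconditionally, so the ``Furthermore'' clause is meaningful only under strict inequality (in the equality case $\xb^*(N) = \xb^* \in \cu$, so the conclusion $\xb^*(N) \notin \cu$ cannot hold as literally stated); the paper's proof implicitly makes the same reading when it asserts ``clearly $\xb^*(N) \notin \cu$.''
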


\begin{proof}
Let $\cv = \ct_N \cap \{\xb : C\xb \leq c \}$. It is easy to see that $\cu \subseteq \cv$. Assume $f(\xb^*) = f(\xb^*(N))$, but $\xb^* \neq \xb^*(N)$. Note that $\xb^* , \xb^*(N) \in \cv$. Since $\cv$ is convex, consider a line joining $\xb^*$ and $\xb^*(N)$. For any point $\lambda_t = t \xb^* + (1 - t) \xb^*(N)$,
\begin{align*}
f(\lambda_t) \leq t f(\xb^*) + (1-t) f(\xb^*(N)) = f(\xb^*(N)). 
\end{align*}
Thus, $f$ is constant on the line joining $\xb^*$ and $\xb^*(N)$. But, it is known that $f$ is strongly convex since $A$ is positive definite. Thus, there exists only one unique minimum. Thus, we have a contradiction, which proves $\xb^* = \xb^*(N)$

Now let us assume that $f(\xb^*) \geq f(\xb^*(N))$. Clearly, $\xb^*(N) \not\in \cu$. Suppose $\xb^* \in \accentset{\circ}{\cu}$. Let $\tilde{\xb} \in \partial\cu$ denote the point on the line joining $\xb^*$ and $\xb^*(N)$.  Clearly, $\tilde{\xb} = t\xb^* + (1-t)\xb^*(N)$ for some $t > 0$. Thus,
\begin{align*}
f(\tilde{\xb}) < t f(\xb^*) + (1-t) f(\xb^*(N)) \leq f(\xb^*)
\end{align*}
But $\xb^*$ is the minimizer over $\cu$. Thus, we have a contradiction, which gives $\xb^* \in \partial\cu$. This completes the proof.
\end{proof}

\begin{lemma}
\label{lem:step2}
Following the notation of Lemma \ref{lem:step1}, if $\xb^*(N) \not\in
\cu$, then $\xb^*$ lies on $\partial\cu$ within the conic cap of $\cu$
generated from $\xb^*(N)$.
\end{lemma}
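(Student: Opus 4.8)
The plan is to derive the claim from the conclusion of Lemma~\ref{lem:step1} together with a single convexity estimate along the segment $[\xb^*(N),\xb^*]$. First I would pin down the strict inequality $f(\xb^*(N)) < f(\xb^*)$. Since $\cs \subseteq \ct_N$ we have $\cu \subseteq \cv$ with $\cv = \ct_N \cap \{\xb : C\xb \le c\}$, hence $f(\xb^*(N)) = \min_{\cv} f \le f(\xb^*)$; equality is impossible, because Lemma~\ref{lem:step1} would then force $\xb^*(N) = \xb^* \in \cu$, contradicting $\xb^*(N)\notin\cu$. Lemma~\ref{lem:step1} also gives $\xb^*\in\partial\cu$ already, so it only remains to locate $\xb^*$ within the conic cap.

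Next I would show that the relative interior of the segment from $\xb^*(N)$ to $\xb^*$ avoids $\cu$. For $t\in[0,1)$ put $\lambda_t := t\xb^* + (1-t)\xb^*(N)$; convexity of $f$ and $f(\xb^*(N)) < f(\xb^*)$ give
\begin{align*}
f(\lambda_t) \le t f(\xb^*) + (1-t) f(\xb^*(N)) < f(\xb^*).
\end{align*}
Thus no $\lambda_t$ with $t\in(0,1)$ can lie in $\cu$: such a point would be feasible for $\cp$ with objective value strictly below $f(\xb^*)$, contradicting optimality of $\xb^*$ over $\cu$. Hence $(\xb^*(N),\xb^*)\cap\cu=\emptyset$.

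Finally I would translate this into the conic-cap statement: the previous step says precisely that $\xb^*$ is the first point of $\cu$ met by the half-line issuing from $\xb^*(N)$ in the direction $\xb^*-\xb^*(N)$, i.e.\ $\xb^*$ lies on the portion of $\partial\cu$ visible from $\xb^*(N)$, which is the conic cap of $\cu$ generated by $\xb^*(N)$; together with $\xb^*\in\partial\cu$ this is exactly the assertion. I expect the only real obstacle to be definitional --- making the phrase ``conic cap generated from $\xb^*(N)$'' precise and checking that it matches this visibility description; the analytic content is just the displayed inequality, which uses only convexity of $f$ and the strict separation of optimal values supplied by Lemma~\ref{lem:step1}, not even strong convexity. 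A trivial point worth recording is nondegeneracy: $\xb^*\neq\xb^*(N)$ since one lies in $\cu$ and the other does not, so the segment and the half-line are well defined.
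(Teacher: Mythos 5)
Your proof is correct and follows essentially the same route as the paper: the paper's justification is a one-line appeal to the segment-and-convexity argument inside the proof of Lemma \ref{lem:step1}, which is exactly what you spell out (strict inequality $f(\xb^*(N)) < f(\xb^*)$, the open segment between $\xb^*(N)$ and $\xb^*$ missing $\cu$, hence $\xb^*$ lying on $\partial\cu$ visible from $\xb^*(N)$). Your write-up is in fact more explicit than the paper's terse reference, including the needed clarification of what ``conic cap'' should mean, which the paper leaves undefined.
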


\begin{proof}
Since the gradient of $f$ is linear, the result is easy to see from the proof of the second assertion in Lemma \ref{lem:step1}.
\end{proof}

Now we can identify the function $g$ by considering the maximum
distance of the points lying on the intersection of the cone and
$\cs$. This is highly dependent on the shape of $\cs$ and on the cover
$\ct_N$. Explicit calculation can give us explicit rates of
convergence, which we leave as future work.

\section{Experimental Results}
\label{sec:expt_results}
A detailed study of solving the original optimization
problem \eqref{eq:orig_qp} via the efficient solution approach of
Section \ref{sec:soln_prob} has been done in \cite{basu_user}. For
compactness, we only report the results regarding the experimentation
with modeling interactions.

\subsection{Need for modeling interaction}
We first show that if we ignore the interaction effect, we will get an
increasingly worse solution as the dimension of the problem
increases. We consider the following simple optimization problem for
our purposes.
\begin{equation}
\label{simple_QCQP}
\begin{aligned}
& \underset{\xb}{\text{Minimize}} & &  -\xb^T\pb + \frac{\gamma}{2} \xb^T\xb\\
& \text{subject to}&  &\xb'\rb  \leq P\\
& & & 0 \leq \xb \leq 1
\end{aligned}
\end{equation} 
where the truth is $\pb = -\Qb_p \xb$ and $\rb = \Qb_r \xb$ for
positive definite matrices $\Qb_p, \Qb_r$ as defined in
Section \ref{sec:interact}. We solve the true optimization problem,
\begin{equation}
\label{truth}
\begin{aligned}
& \underset{\xb}{\text{Minimize}} & &  \xb^T\left( \Qb_p + \frac{\gamma}{2}\Ib\right) \xb\\
& \text{subject to}&  &\xb'\Qb_r\xb  \leq P\\
& & & 0 \leq \xb \leq 1
\end{aligned}
\end{equation} 
to get $\bsx^*$. Now, ignoring the dependency structure we use $\pb =
diag(\Qb_p)$ and $\rb = diag(\Qb_r)$ to solve the
problem \eqref{simple_QCQP} to get $\hat{\xb}$. Note that the
diagonals are just the single slot estimates.

We simulate two random instances of the single slot estimate and
create $diag(\Qb_p), diag(\Qb_r)$ as a decreasing function of the slot
position. We then simulate the rest of the matrix $\Qb_p$ and $\Qb_r$
according to the structure in Section \ref{sec:interact}. For
different values of sample size $n$, we calculate the relative error
as
\begin{align*}
err(n) = \frac{\hat{\xb}^T\left( \Qb_p
+ \frac{\gamma}{2}\Ib\right) \hat{\xb} - (\xb^*)^T\left( \Qb_p
+ \frac{\gamma}{2}\Ib\right) \xb^* }{(\xb^*)^T\left( \Qb_p
+ \frac{\gamma}{2}\Ib\right) \xb^*}
\end{align*}
Figure \ref{fig:obj_val} shows the log of two different objective values. It
can be seen that we start getting increasing worse solutions as the
dimension increases. The fluctuations are because of the randomness in
the simulations.

\begin{figure}[!h]
\centering
\includegraphics[width = \linewidth]{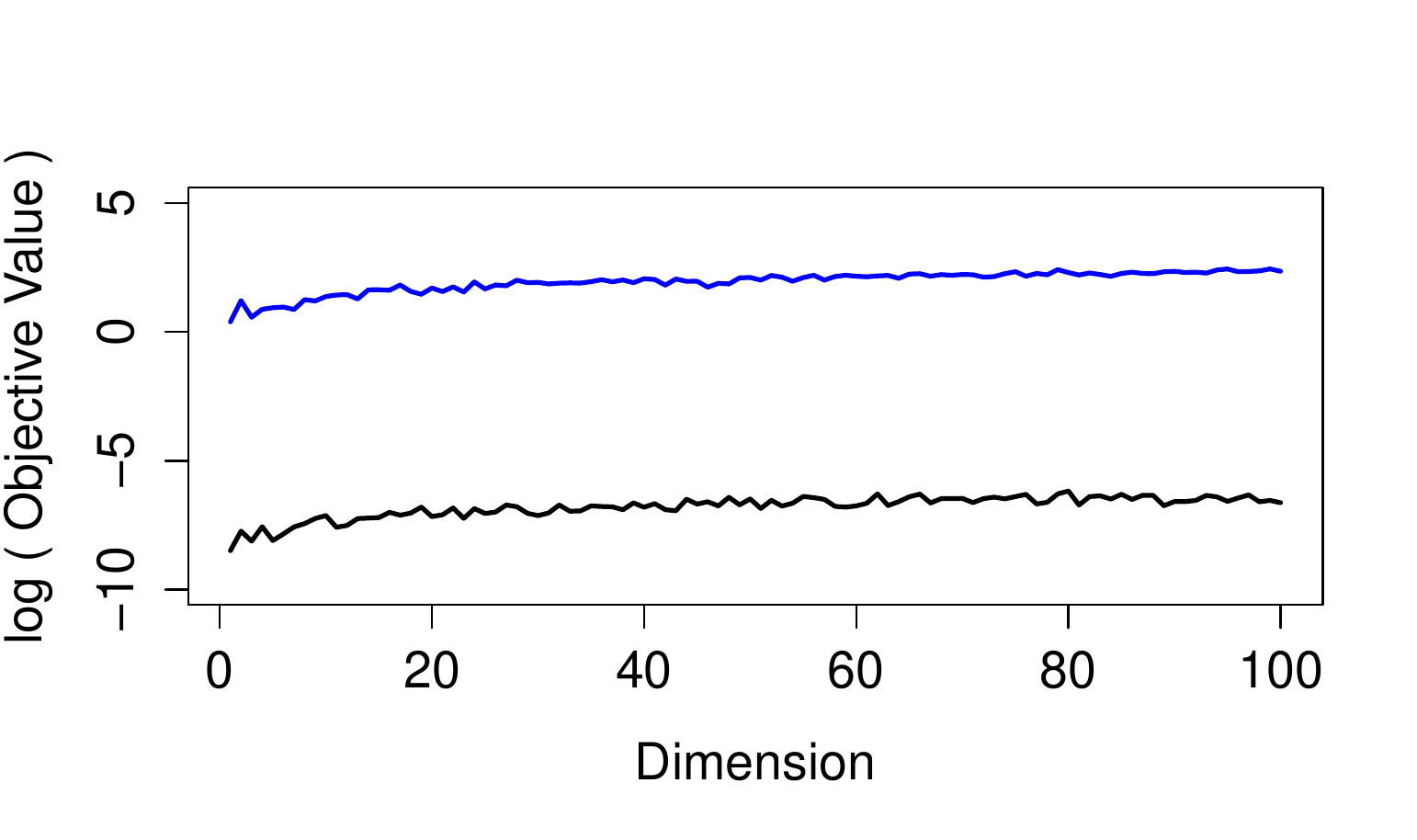}
\caption{\label{fig:obj_val} 
The black line shows the true objective value obtained from solving
problem \eqref{truth}. The blue line shows the objective value using
the solution from the ignored dependency problem.}
\end{figure}

Figure \ref{fig:err} show the log of the relative error as a function
of the dimension. It can be seen from our simulation, that we get an
average relative error of about $6 \times 10^8$ which is extremely
high of a cost to pay for ignoring the interaction effect.

\begin{figure}[!h]
\centering
\includegraphics[scale = 0.55]{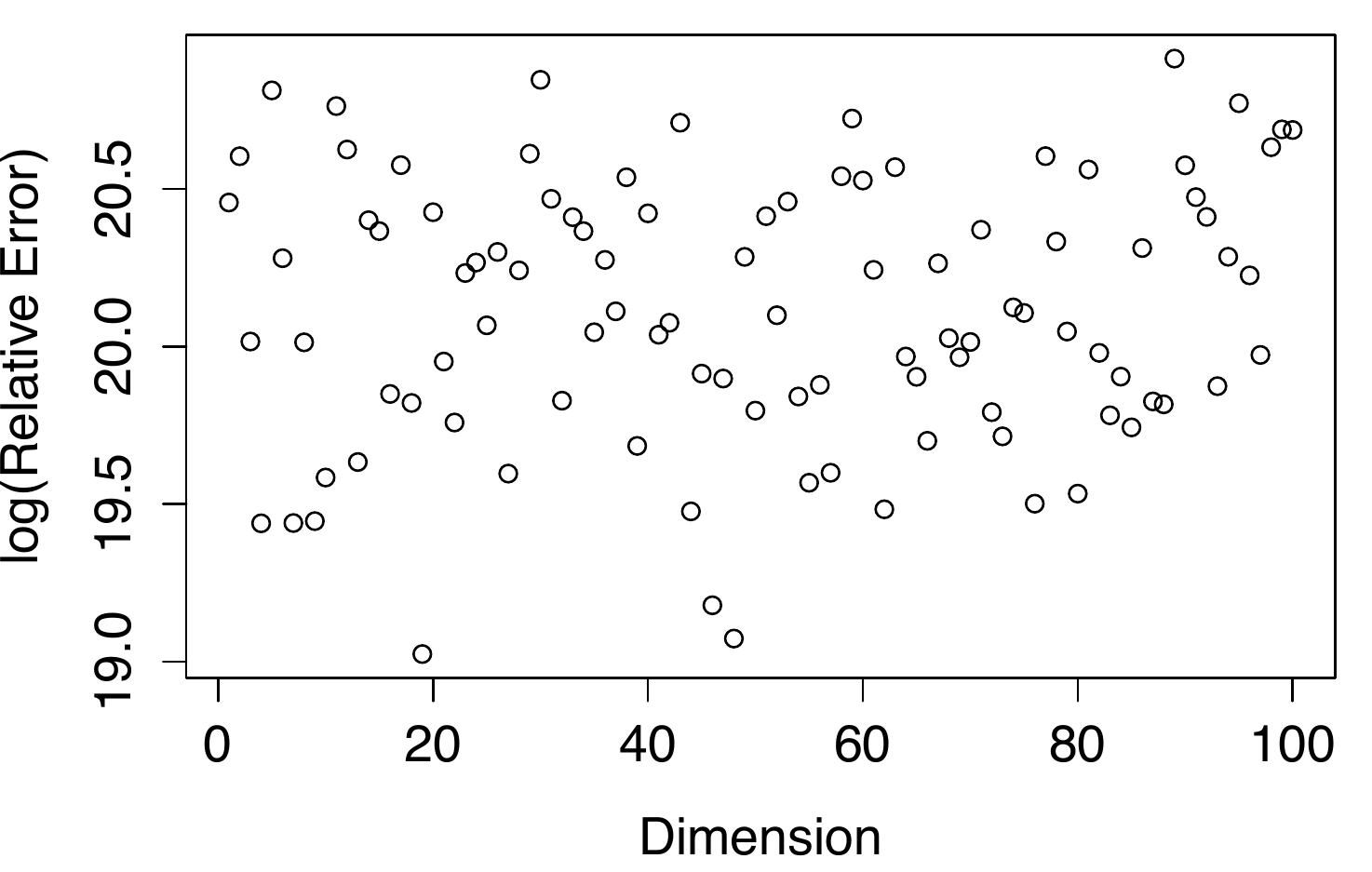}
\caption{\label{fig:err} 
Plotting the log(relative error) against the dimension of the problem}
\end{figure}

\begin{table*}[!t]
\caption{\label{tab:obj_time}The optimal objective value and convergence time}
\begin{center}
 \begin{tabular}{|| c | c | c | c | c |c |c ||} \hline $n$ & Our
 method & Sampling on $[0,1]^n$ & Sampling on $\mathbb{S}^n$ & SDP
 relaxation & RLT relaxation & Exact \\[0.5ex]
\hline\hline 
\multirow{2}{*} {5} & 3.00 & 2.99 & 2.95  & 3.07  & 3.08 & 3.07 \\ 
& (4.61s) & (4.74s) & (6. 11s) & (0.52s) & (0.51s) & (0.49) \\
 \hline
\multirow{2}{*} {10} & 206.85 & 205.21 & 206.5  & 252.88  & 252.88 & 252.88 \\ 
& (5.04s) & (5.65s) & (5.26s) & (0.53s) & (0.51s) & (0.51) \\
\hline
\multirow{2}{*} {20} & 6291.4 & 4507.8 & 5052.2  & 6841.6  & 6841.6 & 6841.6 \\ 
& (6.56s) & (6.28s) & (6.69s) & (2.05s) & (1.86s) & (0.54) \\
\hline
\multirow{2}{*} {50} & 99668 & 15122 & 26239 & $1.11\times10^5$  & $1.08\times10^5$ & $1.11\times10^5$ \\ 
& (15.55s) & (18.98s) & (17.32s) & (4.31s) & (2.96s) & (0.64) \\
\hline
\multirow{2}{*} {100} & $1.40\times10^6$ & 69746 & $1.24\times10^6$  & $1.62\times10^6$  & $1.52\times10^6$& $1.62\times10^6$ \\ 
& (58.41s) & (1.03m) & (54.69s) & (30.41s) & (15.36s) & (2.30s) \\
 \hline
\multirow{2}{*} {1000} & $2.24\times10^7$ & $8.34 \times 10^6$ & $9.02\times10^6$  & \multirow{2}{*} {NA}  & \multirow{2}{*} {NA} & \multirow{2}{*} {NA} \\ 
& (14.87m) & (15.63m) & (15.32m) &  &  & \\
\hline
\multirow{2}{*} {$10^5$} & $3.10\times10^8$ & $7.12 \times 10^7$ & $ 8.39\times10^7$  & \multirow{2}{*} {NA}  & \multirow{2}{*} {NA} & \multirow{2}{*} {NA} \\ 
& (25.82m) & (24.59m) & (27.23m) &  &  & \\
 \hline
\multirow{2}{*} {$10^6$} & $3.91\times10^9$ & $2.69 \times 10^8$ & $7.53\times10^8$  & \multirow{2}{*} {NA}  & \multirow{2}{*} {NA} & \multirow{2}{*} {NA} \\ 
& (38.30m) & (39.15m) & (37.21m) &  &  & \\
 \hline
 \end{tabular}
\end{center}
\end{table*}

\subsection{Comparative Study of QCQP Solutions}
We compare our proposed technique to the current state-of-the-art
solvers of QCQP. Specifically we compare it to the SDP and RLT
relaxation procedures. For small enough problems, we also compare our
method to the exact solution by interior point methods. Furthermore,
we provide empirical evidence to show that our sampling technique is
better than other simpler sampling procedures such as uniform sampling
on the unit square or on the unit sphere and then mapping it
subsequently to our domain as in Algorithm \ref{algo:simulate}. We
begin by considering a very simple QCQP for the form
\begin{equation}
\label{test_qcqp}
\begin{aligned}
& \underset{\xb}{\text{Minimize}} & &  \xb^T \Ab \xb\\
& \text{subject to}&  &(\xb - \xb_0)^T \Bb (\xb - \xb_0)  \leq \tilde{b},\\
& & & \lb \leq \xb \leq \ub
\end{aligned}
\end{equation}
We randomly sample $\Ab, \Bb, \xb_0$ and $\tilde{b}$. The lower bound, $\lb$ and upper bounds $\ub$ are chosen in a way such that they intersect the ellipsoid. We vary the dimension $n$ of the problem and tabulate the final objective value as
well as the time taken for the different procedures to converge in
Table \ref{tab:obj_time}. Throughout our simulations we have chosen $\eta = 2$ and the number of optimal points as $N = \max (1024, 2^m) $, where $m$ is the smallest integer such that $2^m \geq 10n$.

Note that even though the SDP and the interior point methods converge very efficiently for small values of $n$, they cannot scale to values of $n \geq 1000$, which is where the strength of our method becomes evident. From Table \ref{tab:obj_time} we observe that the relaxation procedures SDP and RLT fail to converge within an hour of computation time for $n \geq 1000$, whereas all the approximation procedures can easily scale up to $n = 10^6$ variables.  We can further notice that SDP performs slightly better than RLT for higher dimensions. 

Furthermore, our procedure gives the best approximation result when compared to the remaining two sampling
schemes. Lemma \ref{lem:step1} shows that if the both the objective
values are the same then we indeed get the exact solution. To see how much the approximation deviates from the truth, we also tabulate the relative error, i.e. $\norm{\xb^*(N) - \xb^*}/\norm{\xb^*}$ for each
of the sampling procedures in Table \ref{tab:err}. We omit SDP and RLT results in Table \ref{tab:err} since both of them produce a solution very close to the exact minimum for small $n$. From the results in Table \ref{tab:err} it is clear that our procedure gets the smallest relative error compared to the other sampling schemes that we tried.

\begin{table}[!t]
\caption{\label{tab:err}The Relative Error : $\norm{\xb^*(N) - \xb^*}/\norm{\xb^*}$}
\begin{center}
 \begin{tabular}{|| c | c | c | c ||} 
 \hline
 $n$ & Our method & Sampling on $[0,1]^n$ & Sampling on $\mathbb{S}^n$  \\[0.5ex]
\hline\hline 
5 &0.0615 	& 0.0828 & 0.0897\\
\hline
10&0.0714 	&	 0.1530&0.1229\\
\hline
20 &0.0895&	0.2455&	0.2368\\
\hline
50&0.3352&	3.8189&	1.0472\\
\hline
100&0.8768	&13.3709	&2.0849\\
 \hline
 \end{tabular}
\end{center}
\end{table}

\section{Discussion}
\label{sec:disc}
In this paper, we look at the problem of trading off multiple
objectives while ranking recommendations over mutliple slots. We give
a deterministic serving plan under general constraints for a single
slot and then give a formulation for the multi-slot setting assuming
dependence in interaction of the items in the different slots. We
characterize the conditions under which it is possible to efficiently
solve the problem and give an approximation algorithm for the QCQP,
which involves relaxing the constraints via a non-trivial sampling
scheme. This method can scale up to very large problem sizes while
generating solutions which have good theoretical properties of
convergence.

\section*{Acknowledgement}
We were sincerely like to thank Prof. Art Owen for the useful discussions and his help in providing some $(t,m,s)$-nets for our experiments.

\bibliographystyle{abbrv}
\bibliography{qp-multislot}

 
\end{document}